\documentclass{article}

\usepackage{microtype}
\usepackage{graphicx}
\usepackage{subcaption}
\usepackage{xcolor}
\usepackage{enumitem}
\usepackage{colortbl}
\usepackage{booktabs} % for professional tables
\usepackage[most]{tcolorbox}
\definecolor{LightGray}{gray}{0.92}
\usepackage{xurl}

\usepackage{hyperref}
\hypersetup{breaklinks=true}
\usepackage{graphicx}

\usepackage[accepted]{icml2026}
\usepackage{arydshln}
\usepackage{amsmath}
\usepackage{amssymb}
\usepackage{mathtools}
\usepackage{amsthm}

\usepackage[capitalize,noabbrev]{cleveref}

\theoremstyle{plain}
\newtheorem{theorem}{Theorem}[section]

\newtheorem{lemma}[theorem]{Lemma}

\theoremstyle{definition}

\theoremstyle{remark}
\newtheorem{remark}[theorem]{Remark}

\usepackage[textsize=tiny]{todonotes}

\definecolor{kh}{HTML}{168aff}

\newcommand{\titlename}{
  \includegraphics[height=1.2em]{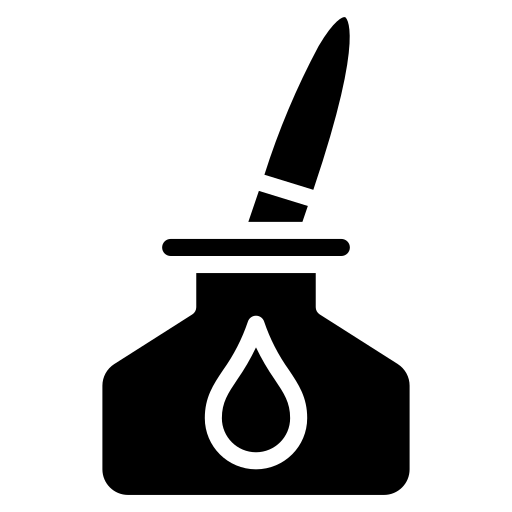}%
  \textsc{The First Drop of Ink}%
  }

\newcommand{\name}{
  \textsc{The First Drop of Ink}%
  }

\icmltitlerunning{\name: Nonlinear Impact of Misleading Information in Long-Context Reasoning}

\begin{document}

\twocolumn[
  \icmltitle{\titlename: \\ Nonlinear Impact of Distracting Information in Long-Context Reasoning}
  \icmlsetsymbol{equal}{*}
  \begin{icmlauthorlist}
    \icmlauthor{Muhan Gao}{tamu}
    \icmlauthor{Zih-Ching Chen}{nvidia}
    \icmlauthor{Kuan-Hao Huang}{tamu}
  \end{icmlauthorlist}
  \icmlaffiliation{tamu}{Department of Computer Science and Engineering, Texas A\&M University, College Station, TX, USA}
  \icmlaffiliation{nvidia}{NVIDIA AI Technology Center, NVIDIA Corporation, Santa Clara, CA, USA}
  \icmlcorrespondingauthor{Muhan Gao}{muhangao@tamu.edu}
  \icmlcorrespondingauthor{Kuan-Hao Huang}{khhuang@tamu.edu}
  \icmlkeywords{Machine Learning, ICML}
  \vskip 0.3in
]

% this must go after the closing bracket ] following \twocolumn[ ...

% This command actually creates the footnote in the first column listing the
% affiliations and the copyright notice. The command takes one argument, which
% is text to display at the start of the footnote. The \icmlEqualContribution
% command is standard text for equal contribution. Remove it (just {}) if you
% do not need this facility.

% Use ONE of the following lines. DO NOT remove the command.
% If you have no special notice, KEEP empty braces:
\printAffiliationsAndNotice{}  % no special notice (required even if empty)
% Or, if applicable, use the standard equal contribution text:
% \printAffiliationsAndNotice{\icmlEqualContribution}

\begin{abstract}

As large language models are increasingly deployed in retrieval-augmented generation and agentic systems that accumulate extensive context, understanding how distracting information affects long-context performance becomes critical. Prior work shows that semantically relevant yet misleading documents degrade performance, but the quantitative relationship between the proportion of distractors and performance remains unstudied. In this work, we systematically vary the hard-distractor proportion in fixed-length contexts, revealing a striking nonlinear pattern: as the proportion of hard distractors increases, performance drops sharply within the first small fraction, while the remainder of the range yields only marginal additional decline. We term this ``\name{}'' effect, analogous to how a single drop of ink contaminates water. Our theoretical and empirical analyses grounded in attention mechanics show that hard distractors capture disproportionate attention even at small proportions, with diminishing marginal impact as their proportion grows. Controlled experiments further show that filtering gains mainly come from context-length reduction rather than distractor removal; substantial recovery requires reducing the hard-distractor proportion to near zero, highlighting the importance of upstream retrieval precision.
% \kh{The camera ready format make section 1 move to the right column. Can we shorten the abstract a bit such that section 1 can back to the left?}

\end{abstract}

\section{Introduction}
% The rise of agentic AI systems has fundamentally transformed how large language models (LLMs) interact with information. Modern frameworks such as Chain-of-Agents, multi-agent collaboration systems, and deep research pipelines enable LLMs to autonomously gather, aggregate, and synthesize information from multiple sources. \kh{Cite.} These systems accumulate contexts over 100K tokens before producing a final response. This paradigm shift from single-query interaction to iterative information aggregation raises a critical question: 
% \textit{How does the quality of aggregated information affect the final output?}

Recent advances in long-context language models~\citep{anthropic2025sonnet1m} have given rise to applications that aggregate extensive documents into a single context. Deep research pipelines~\citep{openai2025deepresearch}, for instance, autonomously retrieve and synthesize information from numerous sources, while long-document analysis systems enable users to query entire books or legal documents~\citep{wenjundocumentsurvey,chang2024booookscore,guha2023legalbench,su2025dynamic}. These applications accumulate large volumes of text, often exceeding 100K tokens, before generating a final response. However, as models ingest more documents, they inevitably encounter information that is topically relevant yet ultimately misleading.

% Prior work has extensively studied positional biases in long-context processing. The ``Lost in the Middle'' phenomenon demonstrates that LLMs exhibit a U-shaped performance curve, struggling to utilize information positioned in the middle of long contexts. \kh{cite}. Subsequent research has explored attention mechanisms, positional encoding improvements, and architectural modifications to address these limitations. ``Lost in the haystack'' finds the length of gold document impacts the accuracy of locating the needle. However, these studies primarily focus on where and how relevant information is placed, rather than examining what happens when irrelevant or misleading information is present.

% Prior work on long-context language models has primarily focused on how the placement of relevant information affects performance. Studies have shown that LLMs struggle to utilize information positioned in the middle of long contexts~\citep{liu2023lostmiddlelanguagemodels}, and that the length of target passages impacts retrieval accuracy~\citep{bianchi2025hiddenhaystacksmallerneedles,levy2025documentslengthisolatingchallenge,levy2024tasktokensimpactinput}. However, relatively little attention has been paid to the role of the surrounding context. In realistic scenarios, the documents aggregated into a long context are not neutral fillers but may contain information that is topically relevant yet misleading~\citep{jin2024longcontextllmsmeetrag}. How such misleading content affects model performance remains underexplored.

\begin{figure*}[!thbp]
    \centering
    \includegraphics[width=1\linewidth]{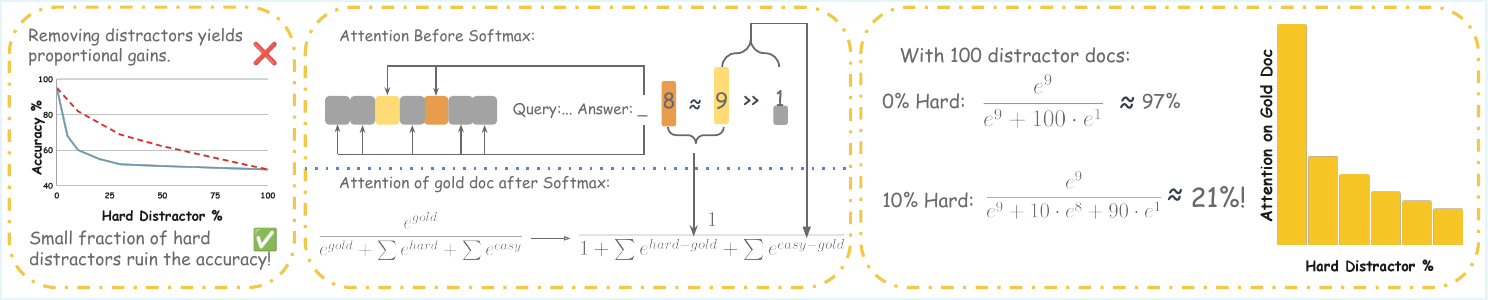}
    \caption{\name{} effect. \textbf{Left:} Conventional linear assumption (top, red dashed line) versus empirically observed nonlinear degradation (bottom, blue curve): a small fraction of hard distractors is sufficient to severely degrade accuracy. \textbf{Middle:} Hard distractors receive similar attention logits as gold documents ($8 \approx 9 \gg 1$), dominating the softmax competition even at low proportions. \textbf{Right:} With 100 distractor documents, attention on gold drops 76\% by adding only 10\% hard distractors. This convex relationship explains \name{}.}
    \vspace{-1.0em}
    \label{fig:teaser}
\end{figure*}

% \kh{I think we need a better transition here.} A related line of work examines how LLMs handle noisy or misleading 
% information in their input contexts. Studies on RAG \kh{First time mention acronym RAG, with the full name.} robustness show 
% catastrophic performance drops when retrieved documents contain 
% distractors. However, these 
% studies typically compare ``clean'' vs ``noisy'' conditions as a binary 
% distinction. The more fundamental question that how performance degrades 
% as the amount of misleading information increases remains unanswered.
% \kh{I feel a bit dangerous to write in this way. Our setting is still one kind of binary.}

Prior work on long-context language models has primarily focused on how the position~\citep{liu2023lostmiddlelanguagemodels} and length~\citep{bianchi2025hiddenhaystacksmallerneedles,levy2025documentslengthisolatingchallenge,levy2024tasktokensimpactinput} of relevant information affect performance. Less attention has been paid to the surrounding context itself. While research on short-context reasoning tasks~\citep{shi2023largelanguagemodelseasily,yang2025llmreasoningdistractedirrelevant} and retrieval-augmented generation (RAG) systems~\citep{lee2026lostnoisereasoningmodels,jin2024longcontextllmsmeetrag} demonstrates that distractors can cause non-negligible performance drops, and \citet{hong2025context} reveals that this effect amplifies as context length grows, how performance degrades in long contexts as the proportion of misleading documents increases remains unexplored. A natural question arises: \textit{how does performance change as the proportion of distractors grows in long-context reasoning?}

% In this work, we simulate the information aggregation scenario common to agentic systems by constructing contexts where a correct source document is mixed with varying proportions of distractor documents which are topically related but ultimately unhelpful for answering the query. Surprisingly, we discover a nonlinear saturation effect: performance degrades rapidly with the introduction of just a small fraction of misleading information, but additional contamination beyond a threshold yields only marginal further decline.

In this work, we systematically vary the proportion of hard distractors within fixed-length contexts and identify \name{} effect as in Figure \ref{fig:teaser}: as hard distractor proportion increases, performance drops sharply within the first small fraction, then plateaus with only marginal further decline. We provide a theoretical analysis grounded in the softmax attention mechanism, showing that attention on the gold document is a convex function of hard distractor proportion, with empirical validation on retrieval heads~\citep{wu2024retrievalheadmechanisticallyexplains,zhang2025queryfocusedretrievalheadsimprove}. This explains the observed nonlinearity: hard distractors dominate the softmax denominator even at small proportions, implying that partially removing them yields negligible recovery and only near-complete removal restores performance.

These findings challenge the prevailing assumption in long-context applications that accumulating more documents improves performance. As long as even a small fraction of hard distractors remains in the context, performance is severely degraded; consequently, post-hoc filtering in most cases yields only marginal recovery. This suggests that preventing hard distractors from entering the context in the first place is more critical than filtering them afterward.

\textbf{Contribution.} (1) We identify \name{} effect across multiple models and datasets: as the proportion of hard distractors increases, performance degrades sharply within the first small fraction, then plateaus. (2) We provide a theoretical explanation showing that attention on the gold document is a strictly convex function of hard distractor proportion, and validate this empirically through attention logit measurements on retrieval heads. (3) We design controlled experiments to disentangle the effects of context length and distractor composition, showing that conventional filtering yields gains primarily from context reduction, and removing hard distractors only provides substantial benefit when their proportion is reduced to near zero.

\section{Related Work}
\textbf{Long-context understanding and evaluation.} The ability to process long context has emerged as a critical capability for large language models (LLMs), with the context window extended from 4K to over 1M tokens~\citep{anthropic2025sonnet1m,geminiteam2024gemini15unlockingmultimodal,xiao2024infllmtrainingfreelongcontextextrapolation,ding2024longropeextendingllmcontext,peng2023yarnefficientcontextwindow}. This expansion has motivated efforts to more effectively understand and evaluate long-context ability.

Among various evaluation approaches, the ``Needle-in-a-Haystack" (NIAH) paradigm~\citep{kamradt2023needlehaystack} is preferred due to its controllability and ease of construction~\citep{hsieh2024rulerwhatsrealcontext,yen2025helmetevaluatelongcontextlanguage,bai2024longbenchbilingualmultitaskbenchmark,zhang2024inftybenchextendinglongcontext}. A ``needle" (a fact or short passage required to answer a query) is inserted into a ``haystack" of unrelated filler text, and the model must locate and use the needle while ignoring the surrounding context. 

Under this paradigm, \citet{liu2023lostmiddlelanguagemodels} identify the "Lost-in-the-Middle" phenomenon: LLMs prioritize information at the start and end of contexts while neglecting middle portions. This limitation persists across various scenarios~\citep{lee2024longcontextlanguagemodelssubsume,gao-etal-2024-insights}, motivating follow-up studies to develop methods for mitigating positional bias~\citep{hsieh2024middlecalibratingpositionalattention,zhang2024middlelanguagemodelsuse,wang2025eliminatingpositionbiaslanguage}. Beyond position, needle length also affects retrieval accuracy~\citep{bianchi2025hiddenhaystacksmallerneedles,levy2025documentslengthisolatingchallenge}.

Recent work has also examined the haystack itself. In original settings~\citep{kamradt2023needlehaystack,hsieh2024rulerwhatsrealcontext}, the haystack consists of irrelevant documents, posing no semantic confusion with the target needle. \citet{yang2025controllableexaminationlongcontextlanguage} use synthetically generated biographies to improve coherence between needles and haystack, better approximating realistic retrieval conditions. Further studies introduce semantically related distractors into the haystack and observe non-negligible performance degradation~\citep{lee2026lostnoisereasoningmodels,hong2025context}. However, under the long context setting, how performance varies with the proportion of distractors in the haystack remains unexplored.

\textbf{Information aggregation in agentic systems.} The rise of agentic AI has fundamentally transformed how LLMs interact with external information. Rather than responding to a single query with a fixed context, modern systems such as deep research pipelines~\citep{openai2025deepresearch,zhang2025websearchagenticdeep}, multi-agent collaboration frameworks~\citep{wu2023autogenenablingnextgenllm,hong2024metagptmetaprogrammingmultiagent,li2023camelcommunicativeagentsmind}, and tool-augmented agents~\citep{schick2023toolformerlanguagemodelsteach,qin2023toolllmfacilitatinglargelanguage,yao2023reactsynergizingreasoningacting} autonomously gather, aggregate, and synthesize information across multiple retrieval rounds. These systems routinely accumulate contexts exceeding 100K tokens before producing a final response~\citep{singh2025agenticretrievalaugmentedgenerationsurvey}. Recent work has shown that even context length alone can degrade performance~\citep{du2025contextlengthhurtsllm}, further underscoring the challenges of information aggregation at scale.

\begin{figure*}[!htbp]
    \centering
    \includegraphics[width=\textwidth]{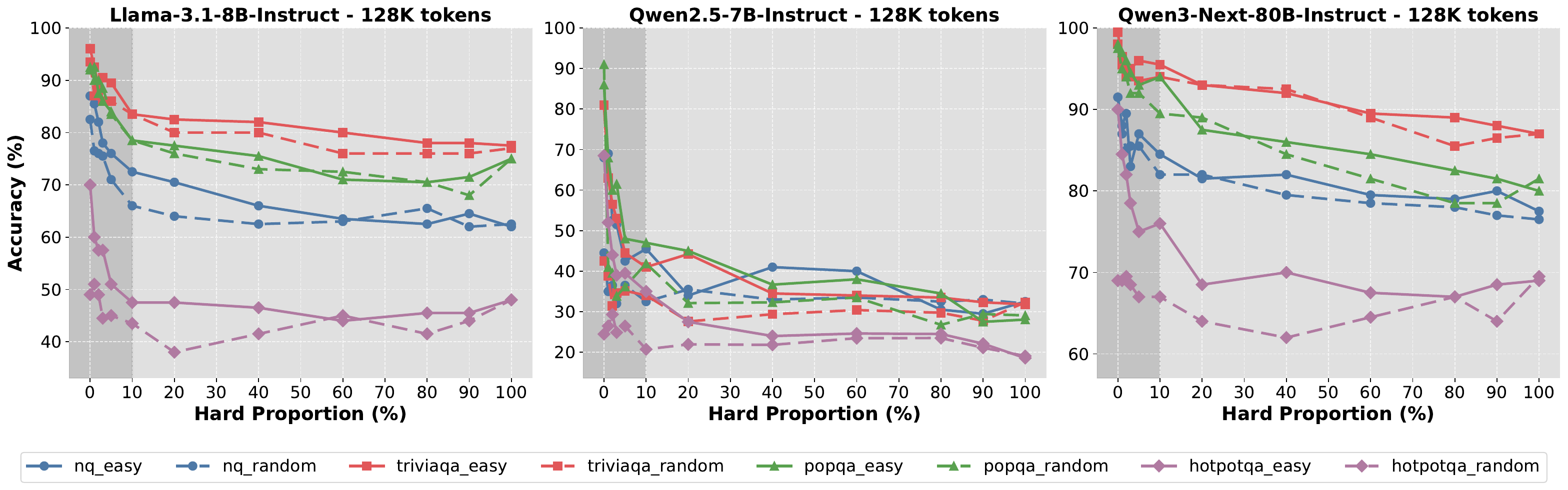}
    \caption{Accuracy as a function of hard distractor proportion at 128K context length across three models (\texttt{Llama-3.1-8B-Instruct}, \texttt{Qwen2.5-7B-Instruct}, and \texttt{Qwen3-Next-80B-Instruct}) on Natural Questions, TriviaQA, PopQA and HotpotQA. Across all configurations, introducing the first 10\% of hard distractors (shaded region) causes steep performance degradation, while further increases yield only marginal decline. Despite substantial variation in absolute accuracy across datasets (e.g., HotpotQA shows the lowest baseline due to multi-hop reasoning), the nonlinear pattern persists, illustrating the \name{} effect.}
    \vspace{-1em}
    \label{fig:result-128k}
\end{figure*}

This information aggregation process introduces an unavoidable challenge: while gathering relevant information, these systems inevitably accumulate unhelpful or misleading documents along the way~\citep{jin2024longcontextllmsmeetrag,shi2023largelanguagemodelseasily,yang2025llmreasoningdistractedirrelevant}. Prior work demonstrates that such noisy retrieval can significantly degrade LLM performance~\citep{Cuconasu_2024,yoran2024makingretrievalaugmentedlanguagemodels}. In response, filtering and reranking have become standard techniques for improving RAG performance, operating under the assumption that removing distractors yields substantial gains~\citep{glass2022re2gretrievererankgenerate,yoran2024makingretrievalaugmentedlanguagemodels}. However, these findings are derived from relatively short contexts of only a few thousand tokens. Whether the same assumptions hold, and whether existing mitigation strategies remain effective, as context windows scale to 128K tokens and beyond, remains underexplored.

\section{Nonlinearity in Distractor Effects}
\label{sec:exp}

We study how the proportion of hard distractors affects model performance in a multi-document question answering setting, where a language model must locate relevant information among retrieved passages. Formally, given a query~$q$, a gold passage $\mathcal{J}^*$ containing the answer, and a set of $N$ distractor passages $\{\mathcal{P}_1, \ldots, \mathcal{P}_N\}$, the model must attend to $\mathcal{J}^*$ to produce the correct answer. We categorize distractors into three types based on their semantic relevance to $q$: \textbf{easy} ($\mathcal{E}$), \textbf{random} ($\mathcal{R}$), and \textbf{hard} ($\mathcal{H}$), and systematically vary their proportions to study the resulting performance degradation. We begin by describing our experimental setup (\S\ref{sec:setup}) and then present our findings (\S\ref{sec:exp-results}).

\subsection{Experimental Setup}
\label{sec:setup}
\textbf{Dataset.} We use Natural Questions~\citep{kwiatkowski-etal-2019-natural}, TriviaQA~\citep{joshi-etal-2017-triviaqa}, PopQA~\citep{mallen2023trustlanguagemodelsinvestigating}, and HotpotQA~\citep{yang2018hotpotqadatasetdiverseexplainable}, covering both single-hop and multi-hop reasoning. Each sample is a tuple $(q, a, \mathcal{J}^*)$, where $q$ is the question, $a$ is the gold answer, and~$\mathcal{J}^*$ is the gold passage from which $a$ can be derived.

\textbf{Distractors.} To control distractor difficulty, we use three categories of passages with varying degrees of relevance to the query $q$: (1) \textbf{Easy} ($\mathcal{E}$): repetitions of a single filler sentence \textit{"The grass is green. The sky is blue. The sun is yellow..."}; (2) \textbf{Random} ($\mathcal{R}$): arbitrary passages sampled from the Wikipedia \texttt{2019-08-01} dump from KILT~\citep{petroni2021kiltbenchmarkknowledgeintensive}; (3) \textbf{Hard} ($\mathcal{H}$): semantically related passages retrieved from Wikipedia using BM25, which are topically relevant to $q$ but do not contain the answer. We use \texttt{gpt-4o-mini} to examine each hard distractor and filter out those that contain the answer in any form (including paraphrases or alternative expressions of $a$, prompt in \S\ref{appendix:prompts}), ensuring that hard distractors are genuinely misleading rather than inadvertently providing correct information. All three categories of distractors are normalized to approximately 100–150 tokens to avoid length bias.

\textbf{Input.} Given a target context length $T$ and a hard distractor proportion $p \in [0, 1]$ (see \S\ref{appendix:detailed-exp} for specific values), we construct the input by concatenating the gold passage $\mathcal{J}^*$ with distractors sampled to fill the context. For each dataset, we consider two mixing strategies: (1) \textit{easy-hard mixing}, where proportion $p$ of distractors are from $\mathcal{H}$ and $(1-p)$ are from $\mathcal{E}$ (e.g., \texttt{nq\_easy}), and (2) \textit{random-hard mixing}, where proportion $p$ of distractors are from $\mathcal{H}$ and $(1-p)$ are from $\mathcal{R}$ (e.g., \texttt{nq\_random}). All passages are randomly shuffled before concatenation to avoid positional bias. For each setting (dataset $\times$ context length $\times$ hard proportion), we sample 200 examples for evaluation.

\begin{figure*}[!thbp]
    \centering
    \includegraphics[width=1\linewidth]{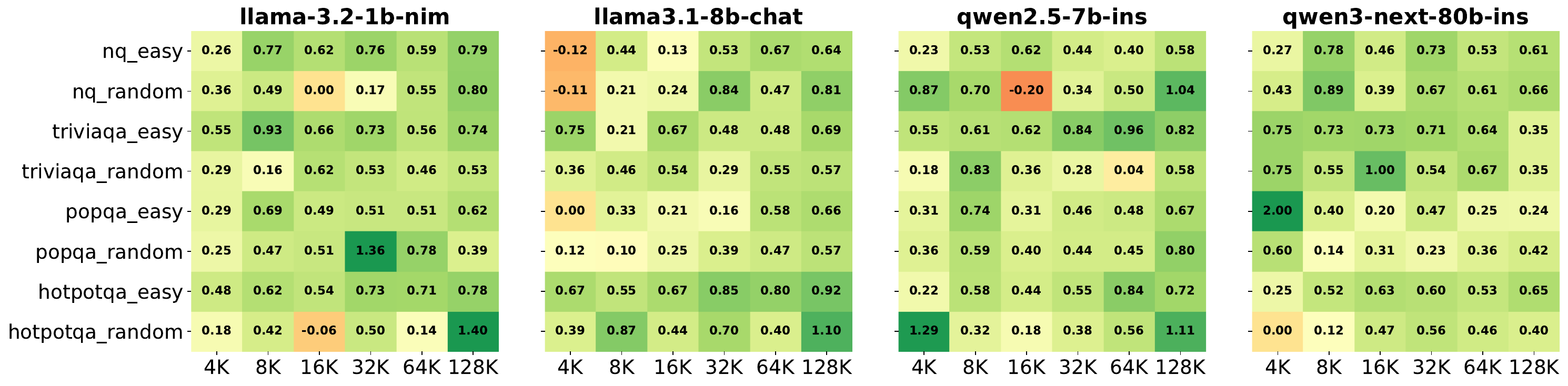}
    \vspace{-1.5em}
    \caption{Drop ratio of accuracy degradation across different context lengths, models, and datasets. The drop ratio measures the fraction of total performance loss that occurs in the first 10\% of hard distractors. 
    A linear degradation would yield $\textbf{0.1}$. Negative values indicate the first 10\% of hard distractors does not further degrade performance. Darker green indicates higher drop ratios (stronger nonlinearity), while orange/red indicates values near or below the linear baseline. The prevalence of green across the table confirms that the \name{} effect is consistent across models and datasets.}
    \vspace{-1em}
    \label{fig:drop-rate}
\end{figure*}

\textbf{Evaluation.} \citet{hsieh2024rulerwhatsrealcontext} employs string containment matching to evaluate QA tasks:
\[
\text{Accuracy} = \frac{1}{N} \sum_{i=1}^{N} \max_{r \in R_i} \mathbb{1}[\texttt{lower}(r) \subseteq \texttt{lower}(p_i)]
\]
where $p_i$ denotes the model prediction, $R_i$ is the set of reference answers, and $\mathbb{1}[\cdot]$ is the indicator function. A prediction is considered correct if any reference answer appears as a substring within it (case-insensitive). However, we observe that string matching suffers from false negatives (e.g., ``3'' vs. ``three'', ``Bill Clinton'' vs. ``William Jefferson Clinton'', ``1986-2013'' vs. ``from 1986 to 2013''). Taking HotpotQA as an example, we identify 36 out of 200 samples (18\%) where the model's response is semantically correct but marked incorrect by string matching.

Therefore, we follow \citet{yen2025helmetevaluatelongcontextlanguage} and use \texttt{gpt-4o-mini} as an LLM judge to verify correctness. The judge receives only the gold document $\mathcal{J}^*$, question $q$, correct answer $a$, and model output, and determines whether the output is semantically correct (prompt in \S\ref{appendix:prompts}). We manually check 100 samples on each dataset and find the judge produces only 17 false negatives in total (4.25\%), consistent with prior findings that LLM judges achieve Cohen's $\kappa$ of 0.72--0.91 with human judgment~\citep{yen2025helmetevaluatelongcontextlanguage}.

\subsection{\name{} Effect}
\label{sec:exp-results}
We demonstrate the results for 3 models on the length of 128K tokens in Figure~\ref{fig:result-128k}, more detailed results for all the models can be found in \S\ref{appendix:detailed-exp}.

\textbf{Accuracy shows a nonlinear relationship with hard distractor proportion.} 
As shown in Figure~\ref{fig:result-128k}, the initial increase in hard distractor 
proportion (0--10\%, shaded region) causes disproportionately large performance 
drops compared to subsequent increases (10--100\%). To quantify this asymmetry, 
we compute the ratio of accuracy drop in the 0--10\% region versus the total 
drop from 0--100\% in Figure~\ref{fig:drop-rate}:
\[
\text{Drop Ratio} = \frac{\text{Acc}(0\%) - \text{Acc}(10\%)}{\text{Acc}(0\%) - \text{Acc}(100\%)}
\]
A linear degradation would yield a ratio of $0.1$; significantly higher values 
indicate \textit{front-loaded} degradation. For example, on \texttt{nq\_easy} at 128K context, 
\texttt{Qwen2.5-7B-Instruct} exhibits a drop ratio of $0.58$, which means 58\% of the total 
degradation occurs in the first 10\% of hard distractors. These results show the \name{} effect, which is contrary to the linear degradation assumption where each additional hard distractor contributes equally and the expected ratio would be $0.1$.

% \textbf{Hard-easy contrast amplifies the nonlinearity.} Comparing solid lines (\texttt{\_easy}) with dashed lines (\texttt{\_mix}) in Figure~\ref{fig:result-128k}, \texttt{\_easy} configurations exhibit steeper initial drops. This is expected: hard distractors replacing trivially irrelevant fillers create a starker contrast in attention competition, whereas replacing random Wikipedia passages yields a more gradual transition. The magnitude of the \name{} effect thus depends on the gap between hard distractors and the baseline context.

\section{The First Drop Matters Most}
\label{sec:theory}
In this section we theoretically analyze the mechanistic reason of \name{} effect based on the transformer's attention mechanism.

\subsection{Preliminaries and Notations}

\textbf{Attention mechanism.} For a sequence of $T$ tokens with hidden representations $\{h_i\}_{i=1}^T \in \mathbb{R}^d$, the attention mechanism computes query, key, and value projections:
\[
q_i = W_Q h_i, \quad k_j = W_K h_j, \quad v_j = W_V h_j
\]
The attention logits, weights, and output are:
\begin{equation}
\small
z_{i,j} = \frac{q_i^\top k_j}{\sqrt{d_k}}, \, \alpha_{i,j} = \frac{\exp(z_{i,j})}{\sum_{\ell=1}^T \exp(z_{i,\ell})}, \, o_i = \sum_{j=1}^T \alpha_{i,j} v_j
\label{eq:attention}
\end{equation}
An autoregressive model predicts the next token based on the last position's attention over all preceding tokens.

\textbf{Retrieval task.} When predicting the answer to query $q$, the relevant information lies in the gold passage $\mathcal{J}^*$, a span of tokens within the context. Retrieval succeeds if the model attends sufficiently to $\mathcal{J}^*$ when generating the answer. Prior work on retrieval heads~\citep{wu2024retrievalheadmechanisticallyexplains,zhang2025queryfocusedretrievalheadsimprove} has shown that the attention weight $\alpha_{i,\mathcal{J}^*} := \sum_{j \in \mathcal{J}^*} \alpha_{i,j}$ on the target passage strongly correlates with downstream accuracy: higher attention mass on the gold passage leads to higher probability of correct answer generation.

\begin{figure*}
    \centering
    \includegraphics[width=1\linewidth]{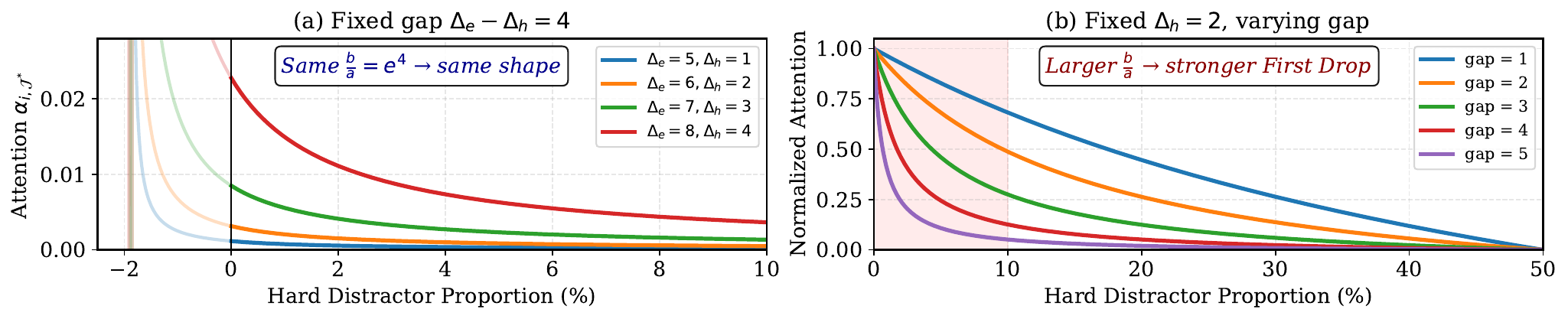}
    \vspace{-1em}
    \caption{Two controlling factors of the theoretical attention curve (Remark~\ref{rem:simplified}). (a) When the margin gap $\Delta_e - \Delta_h$ is fixed at 4, all curves share identical shape (same $b/a = e^4$) but differ in vertical position, controlled by $1/a$. Faded lines extend into $p < 0$ to illustrate the shape equivalence. (b) When $\Delta_h$ is fixed at 2, increasing $\Delta_e$ enlarges the ratio $b/a$, producing more convex curves and amplifying the ``First Drop of Ink'' effect (shaded region, 0--10\%).}
    \vspace{-1em}
    \label{fig:theoretical-two-effects}
\end{figure*}

\textbf{Logit margin.} Let $i$ denote the position of the last token, from which the model generates the answer. For a passage $\mathcal{P}$ spanning multiple tokens, we define its aggregate logit as $z_{i,\mathcal{P}} := \frac{1}{|\mathcal{P}|} \sum_{j \in \mathcal{P}} z_{i,j}$, representing how strongly the last token attends to passage $\mathcal{P}$. The margin between the target passage $\mathcal{J}^*$ and a distractor passage $\mathcal{P}$ is:
\[
\Delta_{\mathcal{P}} := z_{i,\mathcal{J}^*} - z_{i,\mathcal{P}}
\]
where $z_{i,j}$ is the attention logit defined in Eq.~\eqref{eq:attention}. 

In our two mixing strategies (\S\ref{sec:setup}), we always mix hard distractors ($\mathcal{H}$) with a weaker distractor type (either easy or random). For notational simplicity in the following analysis, we use $\mathcal{E}$ to denote the \textbf{weaker} distractor set and $\Delta_e$ to denote its characteristic margin:
\[
\Delta_e := \frac{1}{|\mathcal{E}|} \sum_{\mathcal{P} \in \mathcal{E}} \Delta_{\mathcal{P}}, \quad \Delta_h := \frac{1}{|\mathcal{H}|} \sum_{\mathcal{P} \in \mathcal{H}} \Delta_{\mathcal{P}}
\]
Since hard distractors are semantically more similar to the query and compete more strongly for attention, we have $\Delta_h \ll \Delta_e$. \textbf{We empirically validate this in \S\ref{sec:validation}.}

\subsection{Theoretical Explanation of Nonlinear Degradation}
\begin{lemma}[Attention Weight with Mixed Distractors]
\label{lem:attention-mixed}
Consider a context of total length $T$ tokens, consisting of: (1) A target passage $\mathcal{J}^*$ with $T_g$ tokens; (2) Distractor passages with $T_d$ tokens, where proportion $p \in [0,1]$ are from $\mathcal{H}$ and $(1-p)$ are from the weaker category; (3) Other tokens (query, instructions) with $T_o$ tokens, where $T = T_g + T_d + T_o$. The aggregate attention weight on the target passage is:
\vspace{-0.5em}
\[
\alpha_{i,\mathcal{J}^*}(p) = \frac{1}{1 + (1-p) \cdot a + p \cdot b + c}
\]
where $a := T_d \cdot e^{-\Delta_e}$ and $b := T_d \cdot e^{-\Delta_h}$ represent the aggregate contributions from weaker and hard distractors respectively, and $c := T_o \cdot e^{-\Delta_o}$ denotes the contribution from other tokens.
\end{lemma}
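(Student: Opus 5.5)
The plan is to start from the softmax definition in Eq.~\eqref{eq:attention} and reduce it to the displayed formula. First, write the aggregate attention on the gold passage as
\[
\alpha_{i,\mathcal{J}^*} = \frac{\sum_{j\in\mathcal{J}^*} e^{z_{i,j}}}{\sum_{j\in\mathcal{J}^*} e^{z_{i,j}} + \sum_{j\in \mathcal{D}_h} e^{z_{i,j}} + \sum_{j\in \mathcal{D}_e} e^{z_{i,j}} + \sum_{j\in \mathcal{O}} e^{z_{i,j}}},
\]
splitting the denominator over the four disjoint token groups. Here $\mathcal{D}_h$ contains the $pT_d$ hard-distractor tokens, $\mathcal{D}_e$ contains the $(1-p)T_d$ weaker-distractor tokens, and $\mathcal{O}$ contains the $T_o$ other tokens. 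Dividing the numerator and denominator by the numerator gives $1/(1+\text{ratio terms})$. The task then reduces to expressing each ratio through the margins.

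The key modeling step is a \emph{within-passage homogeneity} approximation: inside each passage, every token's logit is replaced by the passage's aggregate logit $z_{i,\mathcal{P}}$. This is exact if logits are constant within a passage. Otherwise it is the first-order (mean-logit) surrogate implied by the paper's definition of $z_{i,\mathcal{P}}$ as an average, with Jensen's inequality giving the direction of the error. Under this approximation the gold contributes $T_g e^{z_{i,\mathcal{J}^*}}$. A distractor category contributes its token count times $e^{z_{i,\mathcal{J}^*}-\Delta}$, and I then replace the per-passage margins by the category-characteristic margins $\Delta_e,\Delta_h$ (and $\Delta_o$ for the other tokens). This is the same averaging step, applied across passages of a type.

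Dividing through gives ratios $(1-p)T_d e^{-\Delta_e}/T_g$, $pT_d e^{-\Delta_h}/T_g$, and $T_o e^{-\Delta_o}/T_g$. To match the stated constants exactly, I would normalize by measuring the gold's contribution per unit. Concretely, I treat $\mathcal{J}^*$ as a single aggregated unit, and likewise count each group's mass relative to it. Equivalently, I absorb the factor $1/T_g$ into the definitions of $a,b,c$, which the lemma leaves implicit. The resulting denominator is $1+(1-p)a+pb+c$, with $a=T_de^{-\Delta_e}$, $b=T_de^{-\Delta_h}$, and $c=T_oe^{-\Delta_o}$.

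The main obstacle is justifying that replacement of the per-token sum of exponentials by an exponential of averaged logits, both within passages and across passages of a category. The sum of exponentials is not the exponential of the mean. My first move is to state it explicitly as the modeling assumption, namely that logits concentrate around the passage or category mean. I would note that by Jensen's inequality the true distractor mass is at least the approximated one, so the formula is an upper bound on gold attention in that regime. Alternatively, $\Delta_e,\Delta_h,\Delta_o$ can be defined as effective (log-mean-exp) margins, which makes the identity exact. The key structural fact used later, that $p$ enters the denominator affinely, is unaffected by either choice.
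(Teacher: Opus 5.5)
Your route is the same as the paper's: split the softmax denominator into gold, weaker-distractor, hard-distractor and other tokens, set $z_{i,j}=z_{i,\mathcal{J}^*}-\Delta$ on each category, and factor out the gold exponential. You are more explicit than the paper about the two steps that are not exact. The paper asserts $z_{i,j}=z_{i,\mathcal{J}^*}-\Delta_e$ for every weaker-distractor token ``by the definition of logit margin''. That definition only fixes averages, so this is exactly your homogeneity assumption. The paper then disposes of the gold's token count with the remark $T_g\ll T_d$, which does not justify dropping it. Under homogeneity the numerator is $T_g e^{z_{i,\mathcal{J}^*}}$, so the honest constants are $T_d e^{-\Delta_e}/T_g$ and so on, whatever the size of $T_g$ relative to $T_d$.

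Two of your side claims need repair.

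\emph{The Jensen bound.} Your remark yields an upper bound on gold attention only if the gold passage is itself exactly homogeneous. If you also replace the gold tokens by their mean logit, Jensen underestimates the numerator $\sum_{j\in\mathcal{J}^*}e^{z_{i,j}}$ as well. The ratio of distractor mass to gold mass then has no definite error sign, so no bound follows.

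\emph{The normalization.} ``Absorb $1/T_g$ into $a,b,c$'' is incompatible with then writing $a=T_de^{-\Delta_e}$ for the paper's $\Delta_e$. The clean version of your effective-margin idea puts $\log T_g$ into the margins instead. Define $e^{-\Delta_e}:=\bigl(\frac{1}{|\mathcal{D}_e|}\sum_{j\in\mathcal{D}_e}e^{z_{i,j}}\bigr)\big/\sum_{j\in\mathcal{J}^*}e^{z_{i,j}}$, and define $\Delta_h$, $\Delta_o$ analogously. The lemma then holds as an exact identity with the stated $a,b,c$, and no homogeneity assumption is needed. Under homogeneity these margins equal the paper's plus $\log T_g$. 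Hence $b/a=e^{\Delta_e-\Delta_h}$ is unchanged, and so are the convexity lemma and the curve shape in the large-context remark.

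This exactness holds only at each fixed $p$, however. Treating $a,b,c$ as constants, so that $p$ enters affinely, still requires the margins not to depend on $p$. That is an assumption under either choice, not something your normalization delivers. The paper's own retrieval-head measurements show the gap $\Delta_e-\Delta_h$ drifting with $p$.
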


\begin{proof}
From Eq.~\eqref{eq:attention}, the aggregate attention weight on the target passage is:
\[
\alpha_{i,\mathcal{J}^*} = \frac{\sum_{j \in \mathcal{J}^*} \exp(z_{i,j})}{\sum_{j=1}^{T} \exp(z_{i,j})}
\]
The denominator decomposes as:
{\small\[
\underbrace{\sum_{j \in \mathcal{J}^*} \exp(z_{i,j})}_{\text{target passage } \mathcal{J}^*} + \underbrace{\sum_{j \in \mathcal{E}} \exp(z_{i,j})}_{\text{weaker distractors } \mathcal{E}} + \underbrace{\sum_{j \in \mathcal{H}} \exp(z_{i,j})}_{\text{hard distractors } \mathcal{H}} + \underbrace{\sum_{j \in \mathcal{O}} \exp(z_{i,j})}_{\text{other tokens } \mathcal{O}}
\]}
By the definition of logit margin, for tokens in weaker distractors we have $z_{i,j} = z_{i,\mathcal{J}^*} - \Delta_e$, and for tokens in hard distractors we have $z_{i,j} = z_{i,\mathcal{J}^*} - \Delta_h$. Thus:
\begin{flalign*}
& \sum_{j \in \mathcal{E}} \exp(z_{i,j}) = (1-p) \cdot T_d \cdot \exp(z_{i,\mathcal{J}^*}) \cdot e^{-\Delta_e} &
\end{flalign*}
\vspace{-1.5em}
\begin{flalign*}
& \sum_{j \in \mathcal{H}} \exp(z_{i,j}) = p \cdot T_d \cdot \exp(z_{i,\mathcal{J}^*}) \cdot e^{-\Delta_h} &
\end{flalign*}
\vspace{-1.5em}
\begin{flalign*}
& \sum_{j \in \mathcal{O}} \exp(z_{i,j}) = T_o \cdot \exp(z_{i,\mathcal{J}^*}) \cdot e^{-\Delta_o} &
\end{flalign*}
Substituting and factoring out $\exp(z_{i,\mathcal{J}^*})$ from numerator and denominator, and noting that $T_g \ll T_d$ (the target passage is small relative to distractors):
\vspace{-1em}
\begin{align*}
&\alpha_{i,\mathcal{J}^*}(p) \\ &= \frac{1}{1 + (1-p) \cdot T_d \cdot e^{-\Delta_e} + p \cdot T_d \cdot e^{-\Delta_h} + T_o \cdot e^{-\Delta_o}} \\
&= \frac{1}{1 + (1-p)a + pb + c}
\end{align*}
where $a := T_d \cdot e^{-\Delta_e}$, $b := T_d \cdot e^{-\Delta_h}$, and $c := T_o \cdot e^{-\Delta_o}$.
\end{proof}

\begin{lemma}[Monotonicity and Convexity]
\label{lem:monotonicity-convexity}
Let $f(p) = \alpha_{i,\mathcal{J}^*}(p) = \frac{1}{1 + (1-p)a + pb + c}$. 

Then $f'(p) < 0$ (strictly decreasing) and $f''(p) > 0$ (strictly convex) for all $p \in [0,1]$.
\end{lemma}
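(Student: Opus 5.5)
The plan is to write $f(p) = 1/g(p)$ with the affine denominator
\[
g(p) := 1 + (1-p)a + pb + c = (1 + a + c) + (b-a)\,p,
\]
and differentiate directly. Since $g$ is affine in $p$, we have $g'(p) = b - a$ and $g''(p) = 0$. The chain rule then gives
\[
f'(p) = -\frac{b-a}{g(p)^2}, \qquad f''(p) = \frac{2(b-a)^2}{g(p)^3}.
\]
Both signs then reduce to two facts: that $b - a > 0$, and that $g(p) > 0$ on $[0,1]$.

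For positivity of $g$, note that $a = T_d e^{-\Delta_e} > 0$, $b = T_d e^{-\Delta_h} > 0$ and $c = T_o e^{-\Delta_o} \ge 0$, assuming $T_d > 0$. For $p \in [0,1]$, $g(p)$ is $1 + c$ plus a convex combination of the positive numbers $a$ and $b$. Hence $g(p) \ge 1 > 0$, so $f$ is well defined and smooth on $[0,1]$, and $g(p)^2, g(p)^3 > 0$.

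The sign of $b - a$ is the substantive point and the main obstacle, since it does not follow from the algebra alone. We have
\[
b - a = T_d\left(e^{-\Delta_h} - e^{-\Delta_e}\right),
\]
which is strictly positive if and only if $\Delta_h < \Delta_e$, because $x \mapsto e^{-x}$ is strictly decreasing. I will make this hypothesis explicit. It is the modeling assumption $\Delta_h \ll \Delta_e$ stated just before Lemma~\ref{lem:attention-mixed}, namely that hard distractors have a smaller logit margin relative to the gold passage, and it is validated empirically in \S\ref{sec:validation}. Under it, $f'(p) = -(b-a)/g(p)^2 < 0$ and $f''(p) = 2(b-a)^2/g(p)^3 > 0$ for every $p \in [0,1]$.

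I will close with a remark on the degenerate and reversed cases. If $\Delta_h = \Delta_e$, then $f$ is constant. If $\Delta_h > \Delta_e$, then $f$ is increasing, yet it remains strictly convex as long as $a \neq b$, because $(b-a)^2 > 0$. So convexity needs only $a \neq b$, while the strict decrease is exactly where the hard-versus-weaker margin assumption enters. This also points ahead to how the ratio $b/a = e^{\Delta_e - \Delta_h}$ controls the curvature.
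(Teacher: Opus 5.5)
Your proposal is correct and follows the paper's proof essentially step for step: you write the denominator as an affine function of $p$, use $\Delta_h < \Delta_e$ to get $\gamma = b - a > 0$, and read off $f'(p) = -\gamma/D(p)^2 < 0$ and $f''(p) = 2\gamma^2/D(p)^3 > 0$. Your version is slightly more careful than the paper's in two ways: you justify $D(p) \ge 1 > 0$, which the paper only asserts, and you give the correct intercept $1 + a + c$, whereas the paper's rewriting $D(p) = 1 + b + c + p\gamma$ is a harmless slip that does not affect the derivatives.
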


\begin{proof}
Let $D(p) := 1 + (1-p)a + pb + c = 1 + a + c + p(b-a)$. Since $\Delta_h \ll \Delta_e$ (hard distractors have smaller margins), we have $e^{-\Delta_h} > e^{-\Delta_e}$, and thus $b = T_d \cdot e^{-\Delta_h} > T_d \cdot e^{-\Delta_e} = a$. Let $\gamma := b - a > 0$. Then $D(p) = 1 + b + c + p\gamma$.

First derivative:
\[
f'(p) = - \frac{\gamma}{D(p)^2}
\]
Since $\gamma > 0$ and $D(p) > 0$, we have $f'(p) < 0$.

Second derivative:
\[
f''(p) = \frac{2\gamma^2}{D(p)^3}
\]
Since $\gamma^2 > 0$ and $D(p) > 0$, we have $f''(p) > 0$.
\end{proof}

% \begin{tcolorbox}[
%   enhanced jigsaw,
%   colback=LightGray,
%   drop shadow,
%   boxrule=0.9pt,
%   boxsep=0.1pt,
%   left=4pt,
%   right=4pt,
%   top=4pt,
%   bottom=4pt
% ]
% \textbf{Takeaway:} The monotonicity  confirms that increasing hard distractor proportion always hurts attention on the target. The strict convexity further implies that this degradation is \textit{front-loaded}: the first few percent of hard distractors cause disproportionately large drops, while subsequent increases have diminishing impact. Together, these properties provide the theoretical basis for the ``\textbf{First Drop of Ink}'' effect.
% \end{tcolorbox}

\vspace{0.5em}
\begin{remark}[Simplified Form for Large Context]
\label{rem:simplified}
When $a, b \gg 1$ (i.e., $T_d$ is sufficiently large), the constant term in the denominator becomes negligible:
\begin{align*}
\alpha(p) &= \frac{1}{1 + (1-p)a + pb} \approx\frac{1}{(1-p)a + pb} \\
&= \frac{1}{a} \cdot \frac{1}{1 + p\left(\frac{b}{a} - 1\right)}
\end{align*}

This reveals two key insights, as illustrated by Figure~\ref{fig:theoretical-two-effects}:
\begin{itemize}[topsep=0pt, itemsep=0pt, leftmargin=18pt]
    \item \textbf{Vertical position} is controlled by $1/a = e^{\Delta_e}/T_d$: larger $\Delta_e$ shifts the curve upward.
    \item \textbf{Curve shape} is controlled solely by $b/a = e^{\Delta_e - \Delta_h}$: only the margin gap matters, not the absolute values.
\end{itemize}
% Figure~\ref{fig:theoretical-two-effects} illustrates these two effects.
\end{remark}

\begin{tcolorbox}[
  enhanced jigsaw,
  colback=LightGray,
  drop shadow,
  boxrule=0.9pt,
  boxsep=0.1pt,
  left=4pt,
  right=4pt,
  top=4pt,
  bottom=4pt
]
\textbf{Takeaway:} The monotonicity confirms that increasing the hard distractor proportion always hurts attention on the target. The strict convexity further implies that this degradation is \textit{front-loaded}: the first few percent of hard distractors cause disproportionately large drops, while subsequent increases have diminishing impact. Together, these properties provide the theoretical basis for \name{} effect.
\end{tcolorbox}

\begin{figure}[!ht]
    \centering
    \includegraphics[width=0.75\linewidth]{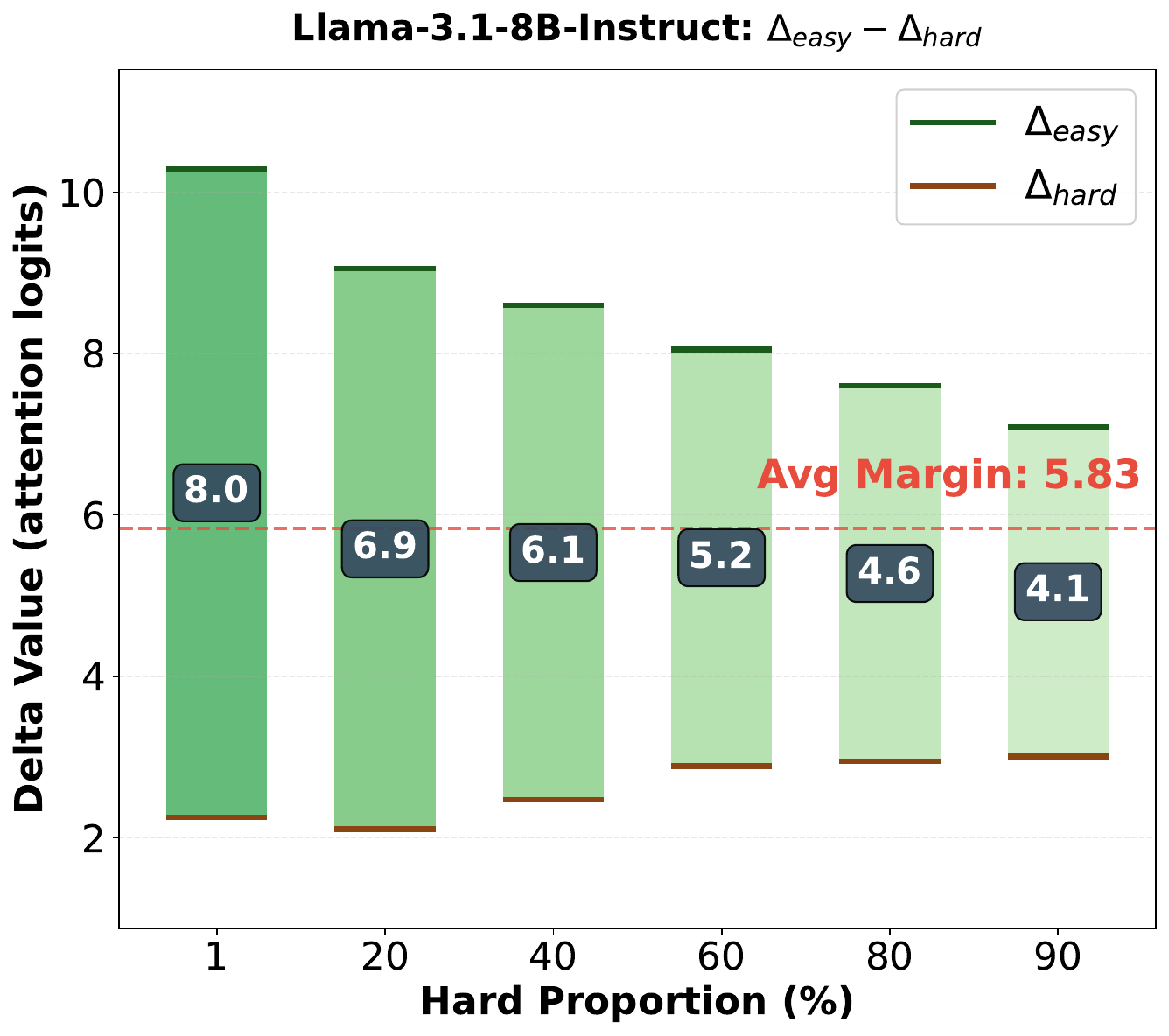}
    \vspace{-0.5em}
    \caption{Empirical measurement of logit margins $\Delta_e$ and $\Delta_h$ on retrieval heads for Llama-3.1-8B-Instruct. Green bars show $\Delta_e$ (margin to easy distractors) and brown bars show $\Delta_h$ (margin to hard distractors). The gap $\Delta_e - \Delta_h$ (annotated values) remains substantial across all hard proportions, with an average of 5.83. This validates the theoretical assumption $\Delta_h \ll \Delta_e$.}
    \label{fig:margin}
\end{figure}

\section{Validation of Theoretical Explanation}
\label{sec:validation}

One natural question is: does the model truly exhibit a clear gap between attention on semantically similar versus dissimilar distractors (i.e., $\Delta_h \ll \Delta_e$)? To answer this, we measure $\Delta_e$ and $\Delta_h$ by computing the attention logit difference between the target passage and distractor passages. Rather than averaging across all attention heads or selecting a specific layer, we follow \citet{zhang2025queryfocusedretrievalheadsimprove} to identify the sparse subset of heads (approximately 1--2\%) responsible for retrieving relevant information from context, as their attention mass directly correlates with retrieval success.

Specifically, given a query $q$ and context containing gold passage $\mathcal{J}^*$ among distractors, we score each attention head $h$ by the attention mass it allocates from query tokens to the gold passage. While \citet{zhang2025queryfocusedretrievalheadsimprove} use post-softmax attention weights, we observe numerical underflow in long contexts (128K tokens) where attention weights become vanishingly small. We therefore use pre-softmax logits instead:
\[
\text{Score}_h(q) = \frac{1}{|q|} \sum_{t_q \in q} \frac{1}{|\mathcal{J}^*|} \sum_{t_d \in \mathcal{J}^*} z_h^{t_q \to t_d}
\]
where $z_h^{t_q \to t_d}$ is the attention logit from query token $t_q$ to document token $t_d$ in head $h$. For each setting of dataset and hard proportion, we use 50 samples to identify the top-scoring heads as retrieval heads, and then measure $\Delta_e$ and $\Delta_h$ on the remaining 150 samples. The identified heads are highly stable: Pearson correlation between train and test scores on the top-16 heads is $0.96 \pm 0.01$, and Spearman rank correlation across all heads is $0.99 \pm 0.00$. We report results for \texttt{Llama-3.1-8B-Instruct} in Figure~\ref{fig:margin}; results for \texttt{Llama-3.2-1B-Instruct} and additional details are provided in \S\ref{appendix:margin}.

\textbf{Margin separation confirms theoretical assumption.} Figure~\ref{fig:margin} shows the measured margins for \texttt{Llama-3.1-8B-Instruct} across different hard proportions. We observe a clear and consistent separation: $\Delta_e \approx 7\text{--}10$ while $\Delta_h \approx 2\text{--}3$, yielding an average gap of $5.83$. This confirms our theoretical assumption that $\Delta_h \ll \Delta_e$. To understand the practical implication, consider a 128K context with $T_d = 128000$ distractor tokens. The ratio $b/a = e^{\Delta_e - \Delta_h} \approx e^{5.83} \approx 340$ means that each hard distractor token contributes $340\times$ more to the softmax denominator than an easy distractor token. Even at just 10\% hard proportion, hard distractors account for $\frac{0.1 \times 340}{0.1 \times 340 + 0.9 \times 1} \approx 97\%$ of the total distractor contribution, completely dominating the attention competition.

\textbf{Shrinking gap reinforces \name{} effect.} One might argue that the margin gap $(\Delta_e - \Delta_h)$ decreases as hard proportion increases: from $8.0$ at 1\% to $4.1$ at 90\%, and wonder whether this undermines our theory. In fact, the opposite is true: this observation reinforces \name{} effect. The largest margin gap occurs precisely when the hard proportion is lowest, meaning the first few hard distractors enjoy the maximum competitive advantage ($b/a \approx e^{8.0} \approx 2980$) over easy distractors. As more hard distractors are added, the gap shrinks and so does their marginal impact ($b/a \approx e^{4.1} \approx 60$ at 90\%). This is exactly the pattern our theory predicts: first drops sharply and then plateaus.

\begin{figure}[!htbp]
    \centering
    \includegraphics[width=0.9\linewidth]{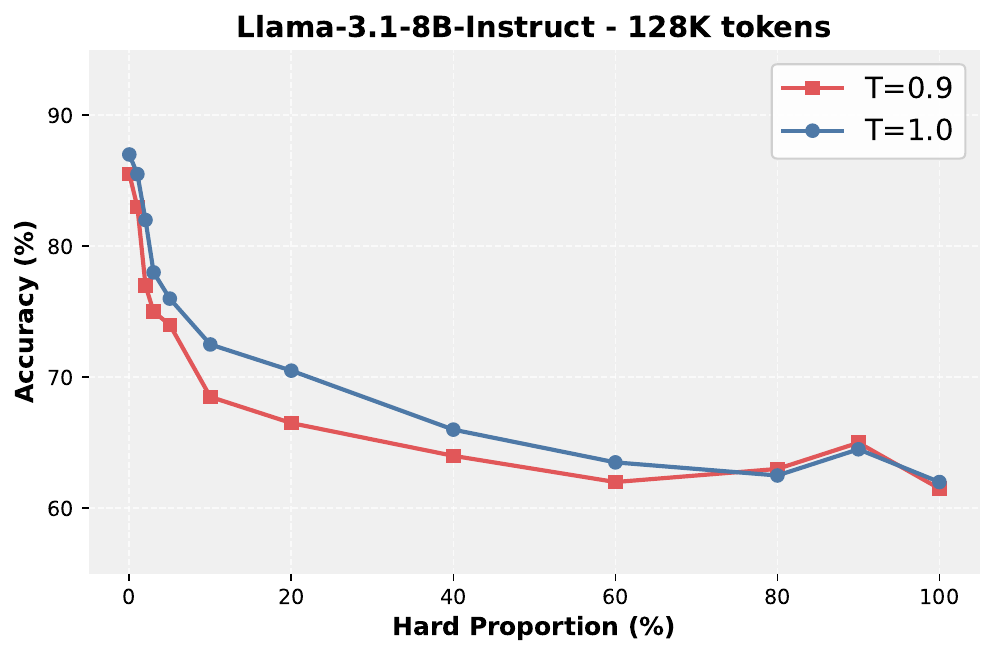}
    \vspace{-0.5em}
    \caption{Effect of softmax temperature scaling on accuracy across hard proportions (\texttt{nq\_easy}, \texttt{Llama-3.1-8B-Instruct}). Lower temperature ($\tau = 0.9$) consistently degrades performance despite theoretically sharpening attention toward the target, indicating that inference-time temperature adjustments cannot mitigate \name{} effect.}
    \label{fig:temperature}
\end{figure}

\section{Implications for Mitigation Strategies}

\subsection{Inference-Time Temperature Scaling}
Our theoretical analysis in \S\ref{sec:theory} indicates that the softmax function's exponential nature causes hard distractors to dominate the attention competition despite their lower logits than the target. A natural hypothesis is that decreasing the softmax temperature $\tau$ during inference could ``sharpen'' the attention distribution, amplifying the target passage's advantage as the highest-logit tokens (Figure~\ref{fig:tmp-scale}). Specifically, we modify the attention computation as:
\[
\alpha_{i,j} = \frac{\exp(z_{i,j}/\tau)}{\sum_{\ell=1}^{N}\exp(z_{i,\ell}/\tau)}
\]
where $\tau < 1$ produces a sharper distribution that concentrates more attention on the target passage.

\textbf{Results.} Figure~\ref{fig:temperature} shows the effect of temperature scaling on \texttt{Llama-3.1-8B-Instruct} across different hard proportions on \texttt{nq\_easy}. Contrary to our hypothesis, decreasing temperature consistently \textit{degrades} performance across all hard proportions. 

\textbf{Why does this fail?} Although lower temperature theoretically sharpens attention toward the target, the model was trained with $\tau = 1$ and its learned dynamics are calibrated to this setting. Modifying $\tau$ at inference time disrupts these dynamics, degrading performance even when the attention distribution appears more favorable. Indeed, effective temperature scaling typically requires adjustment during training or fine-tuning~\citep{ryan2024learnable,ram2025learningfocusfocalattention}, as the model must learn to adapt its representations to the modified softmax behavior. Our results suggest that \name{} effect cannot be mitigated through simple inference-time interventions.

\begin{figure}[!thbp]
    \centering
    \includegraphics[width=0.9\linewidth]{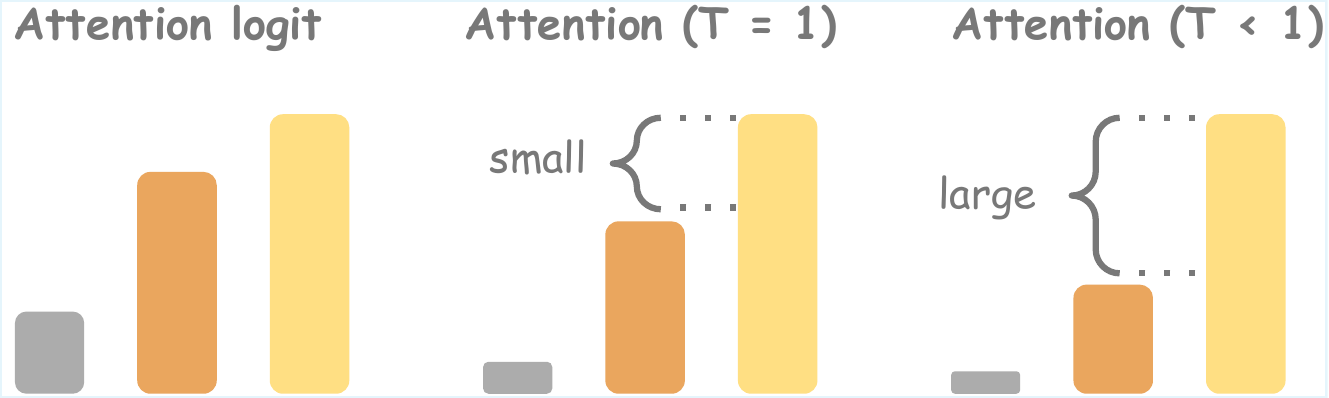}
    \caption{Effect of temperature scaling on attention distribution. From left to right: pre-softmax logits, attention weights at $\tau=1$, and attention weights at $\tau<1$. Colors denote \textcolor[HTML]{999999}{easy distractors}, \textcolor[HTML]{E69238}{hard distractors}, and \textcolor[HTML]{FFD866}{target passage}. Lower temperature sharpens the softmax, suppressing hard distractors while maintaining attention on the target.}
    \label{fig:tmp-scale}
\end{figure}

\subsection{Incremental Filtering of Hard Distractors}

Our main experiments vary the hard proportion while fixing the context length. In practice, however, filtering removes unwanted passages entirely, reducing the overall context length. This creates a confound: when filtering improves performance, is the gain due to removing hard distractors, or simply due to shorter context? We design two experiments to disentangle these factors: (1) We compare \textbf{Filter Hard versus Filter Random} with symmetric starting compositions, removing only hard or weaker distractors respectively, isolating the effect of filtering strategy. (2) We perform \textbf{Proportional Reduction}, shrinking context length while holding the hard distractor ratio fixed by removing both types proportionally, isolating the pure effect of context length. Both experiments are conducted on \texttt{Llama-3.1-8B-Instruct} and \texttt{Qwen2.5-7B-Instruct}  for all four datasets.

\begin{figure*}
    \centering
    \includegraphics[width=1\linewidth]{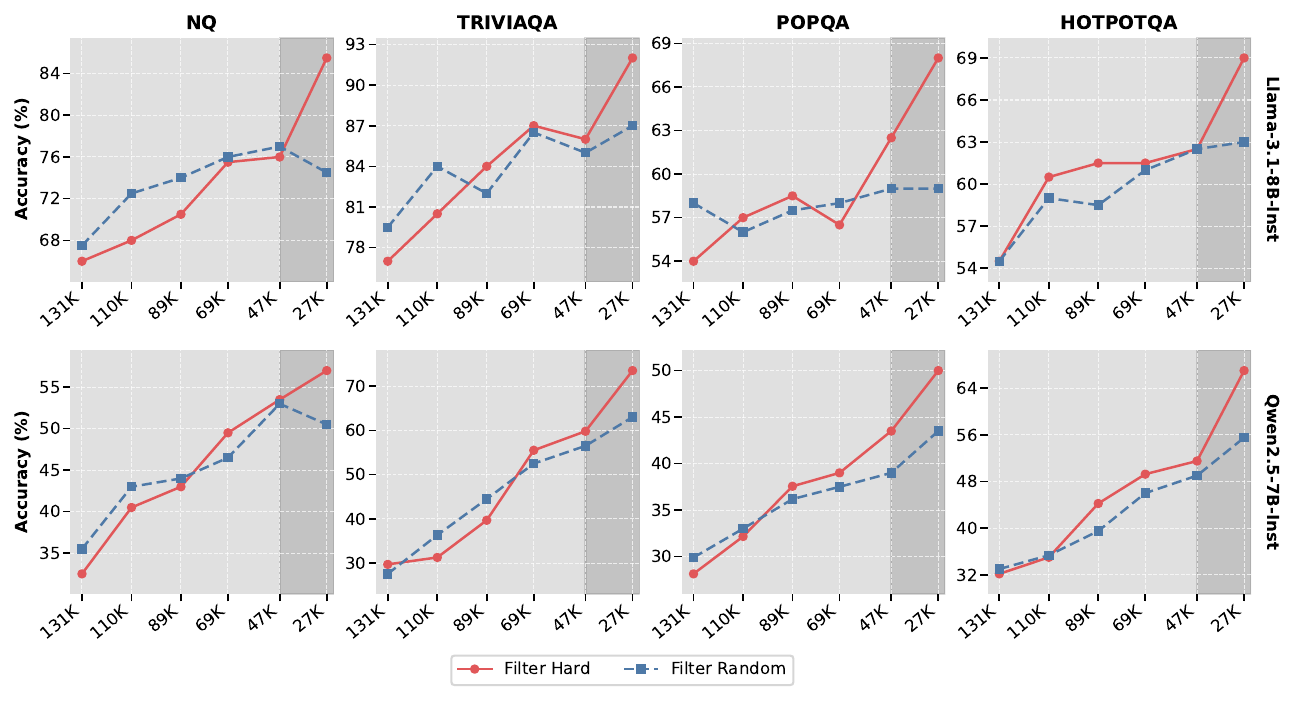}
    \vspace{-1.5em}
    \caption{\textbf{Filter Hard vs.\ Filter Random.} Both strategies yield similar gains from removing the first 80K tokens, indicating that performance recovery comes from context length reduction rather than filtering strategy. The two strategies begin to diverge below 47K tokens (shaded region), where Filter Hard has a near-zero hard distractor proportion. This suggests that the gains from partial filtering are largely attributable to context reduction rather than the removal of hard distractors themselves.}
    \label{fig:filtering-1}
\end{figure*}

\textbf{Filter Hard vs.~Filter Random.} \textbf{Filter Hard} begins with 80\% hard distractors ($\approx 102$K) and 20\% random distractors ($\approx 26$K), progressively removing hard distractors and reducing context length by approximately 20K tokens at each step until 27K tokens remain. \textbf{Filter Random} begins with the reversed composition (20\% hard, 80\% random) and removes random distractors at the same pace. Both experiments end at 27K tokens but with opposite compositions: Filter Hard ends with nearly all random distractors, while Filter Random ends with nearly all hard distractors. Table~\ref{tab:filtering-design} summarizes the composition at each step.

Figure~\ref{fig:filtering-1} shows the results. From 131K to 47K tokens, both filtering strategies yield nearly identical performance gains regardless of whether hard or random distractors are removed. This indicates that the performance improvement has little to do with the filtering strategy itself, and comes almost entirely from reducing context length.
However, the two curves diverge between 47K and 27K tokens (shaded area). At 27K, Filter Hard has reduced the hard proportion to near zero, consistently outperforming Filter Random, which ends with nearly all hard distractors. This asymmetry confirms that the benefit of filtering hard distractors emerges only when their proportion is reduced to near zero; above this threshold, the filtering strategy's benefit is marginal.

\begin{figure*}[!ht]
    \centering
    \includegraphics[width=1\linewidth,
    trim={0 0.1in 0 0},
    clip]{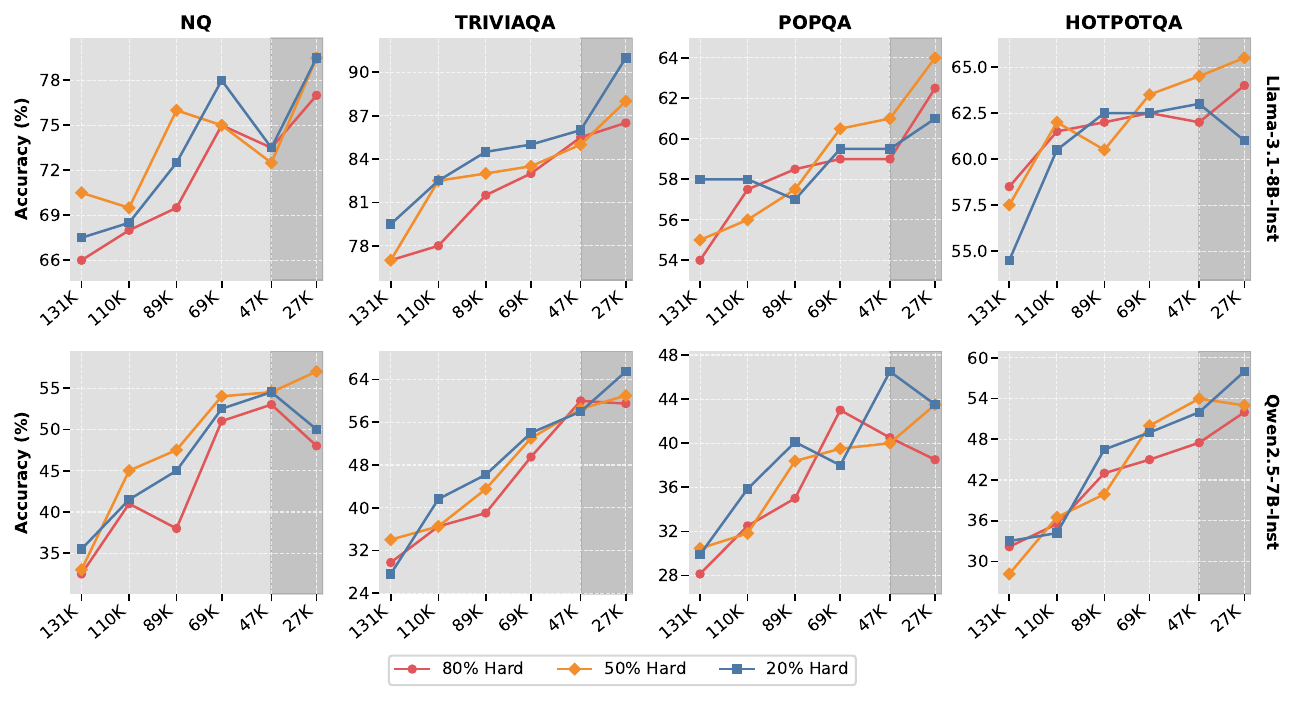}
    \vspace{-0.8em}
    \caption{\textbf{Proportional Reduction.} Context is reduced from 131K to 27K while maintaining a fixed hard distractor ratio (20\%, 50\%, or 80\% hard) by removing documents proportionally from each distractor category. Across both models and all datasets, the three curves follow similar trajectories: reducing the context length consistently improves performance, while varying the fixed hard ratio within this moderate-to-high range has only a limited marginal effect. This suggests that, once the context already contains a non-negligible fraction of hard distractors, the observed recovery is driven primarily by context length reduction rather than by the exact hard-distractor ratio.}
    \label{fig:filter-2}
\end{figure*}

\textbf{Proportional reduction.} To isolate the pure effect of context length, we shrink context from 131K to 27K tokens while maintaining a fixed hard distractor ratio (20\%, 50\%, or 80\%) throughout. These ratios are chosen to lie beyond the initial first-drop region, where performance has largely entered the saturated regime. At each step, we remove both hard and easy distractors proportionally, for example, we remove 4K tokens of hard distractors and 16K tokens of easy distractors in the 20\% setting. In this way, we ensure the composition remains constant as the length decreases.

Figure~\ref{fig:filter-2} shows the results. The three curves largely overlap despite varying hard distractor ratios, indicating that performance scales with context length rather than composition. Together with the Filter Hard vs.\ Filter Random results, this suggests that filtering benefits observed in practice may be primarily a byproduct of context shortening.

This section shows that changing the hard proportion from moderate to high levels has limited marginal effect. In this regime, shortening the context can  dominate the observed recovery. The divergence between \textbf{Filter Hard} and \textbf{Filter Random} at the shortest context lengths should therefore be viewed as an  \textit{idealized} boundary case: a clear strategy-specific gain appears only when the  hard proportion is pushed close to zero, which is difficult to achieve in  realistic retrieval pipelines and is not the regime targeted by most filtering methods. This distinction reconciles the two findings: hard  distractor composition has its largest marginal effect near the initial  contamination boundary, whereas context length dominates after the context has  already been substantially contaminated.

\begin{table}[!th]
\centering
\small
\caption{Experimental design for incremental filtering. Both experiments start at 128K tokens and progressively reduce context length. The symmetric design allows us to separate the effects of context length reduction from distractor composition.}
\label{tab:filtering-design}
\begin{tabular}{l|cc|cc}
\toprule
 & \multicolumn{2}{c|}{\textbf{Filter Hard}} & \multicolumn{2}{c}{\textbf{Filter Random}} \\
\textbf{Context Length} & Hard & Random & Hard & Random \\
\midrule
131K (start) & 80\% & 20\% & 20\% & 80\% \\
110K & 76\% & 24\% & 24\% & 76\% \\
89K & 71\% & 29\% & 29\% & 71\% \\
69K & 62\% & 38\% & 38\% & 62\% \\
47K & 44\% & 56\% & 56\% & 44\% \\
27K (end) & 3\% & 97\% & 97\% & 3\% \\
\bottomrule
\end{tabular}
\end{table}

\section{Limitations and Implications}
\textbf{Limitations.} We use multi-document QA as the experimental setting throughout this paper due to its controllability and ease of evaluation. However, we acknowledge that generalizing our findings to other long-context scenarios (e.g., summarization, code understanding, or multi-turn dialogue)  remains an important direction for future work. While we provide both empirical characterization and mechanistic understanding of \name{} effect, we have not yet identified an effective mitigation strategy. 

\textbf{Implications.} Our work identifies the \name{} effect, implying that removing 90\% of hard distractors may recover only a fraction of the lost performance, while the remaining 10\% continues to dominate attention. For practitioners, this suggests prioritizing retrieval precision over recall to prevent hard distractors from entering the context in the first place. Additionally, our findings imply that disentangling the effects of context length and distractor composition matters, which prior work often conflates when evaluating long-context models. 

\section{Conclusion}
In this work, we identify the \name{} effect: in long-context settings, a small fraction of hard distractors causes disproportionately severe performance degradation, while subsequent additions have diminishing impact. We provide a theoretical explanation grounded in attention mechanics and validate this theory by measuring logit margins on retrieval heads. Our findings challenge the assumption that filtering yields proportional gains and suggest that retrieval precision is far more critical than incremental filtering in long context settings.
\section*{Impact Statement}
We do not foresee any direct negative societal consequences of this work. However, we note that improved understanding of attention mechanisms could potentially be misused to craft adversarial inputs; we encourage the community to develop robust defenses alongside mechanistic insights.

\section*{Acknowledgements}
We sincerely thank Daniel Khashabi, Taiming Lu, and the Texas A\&M NLP community for their helpful comments and feedback.

\bibliography{custom}
\bibliographystyle{icml2026}

%%%%%%%%%%%%%%%%%%%%%%%%%%%%%%%%%%%%%%%%%%%%%%%%%%%%%%%%%%%%%%%%%%%%%%%%%%%%%%%
%%%%%%%%%%%%%%%%%%%%%%%%%%%%%%%%%%%%%%%%%%%%%%%%%%%%%%%%%%%%%%%%%%%%%%%%%%%%%%%
% APPENDIX
%%%%%%%%%%%%%%%%%%%%%%%%%%%%%%%%%%%%%%%%%%%%%%%%%%%%%%%%%%%%%%%%%%%%%%%%%%%%%%%
%%%%%%%%%%%%%%%%%%%%%%%%%%%%%%%%%%%%%%%%%%%%%%%%%%%%%%%%%%%%%%%%%%%%%%%%%%%%%%%
\newpage
\appendix
\onecolumn
\section{Detailed Experiment Results}
\label{appendix:detailed-exp}
\vspace{-0.5em}
As mentioned in \S\ref{sec:exp-results}, below are results for all the models across different settings. 
\vspace{-0.9em}
\begin{table*}[!thbp]
\centering
\small
\setlength{\tabcolsep}{3pt}
\definecolor{lightblue}{RGB}{235,244,250}
\definecolor{lightgreen}{RGB}{237,246,240}
\definecolor{lightpurple}{RGB}{242,240,247}
\definecolor{lightorange}{RGB}{252,243,235}
\renewcommand{\arraystretch}{1}
\caption{Accuracy (\%) of \texttt{Llama-3.2-1B-Instruct} across different hard distractor proportions and context lengths. Hard \% indicates the proportion of hard distractors, with the remaining being easy distractors (Easy) or random Wikipedia passages (Random).}
\label{tab:llama-3.2-1b-results}
\begin{tabular*}{\textwidth}{@{\extracolsep{\fill}}l cccccc cccccc@{}}
\toprule
& \multicolumn{6}{c}{\textbf{Easy}} & \multicolumn{6}{c}{\textbf{Random}} \\
\cmidrule(lr){2-7} \cmidrule(l){8-13}
\textbf{Hard \%} & \textbf{4K} & \textbf{8K} & \textbf{16K} & \textbf{32K} & \textbf{64K} & \textbf{128K} & \textbf{4K} & \textbf{8K} & \textbf{16K} & \textbf{32K} & \textbf{64K} & \textbf{128K} \\
\midrule
\rowcolor{lightblue} \multicolumn{13}{l}{\textit{Natural Questions}} \\
\noalign{\vskip 1pt}
\hdashline
\noalign{\vskip 2.5pt}
0 & 76.0 & 68.5 & 61.0 & 66.0 & 53.0 & 55.5 & 75.0 & 50.5 & 30.5 & 24.0 & 38.0 & 30.5 \\
1 & 75.0 & 58.5 & 43.5 & 51.0 & 43.5 & 35.0 & 69.5 & 37.0 & 32.5 & 28.0 & 33.5 & 27.0 \\
2 & 75.0 & 50.5 & 43.0 & 46.5 & 37.0 & 31.0 & 70.0 & 47.0 & 32.5 & 32.0 & 29.5 & 28.5 \\
3 & 75.0 & 50.5 & 43.0 & 40.0 & 36.5 & 29.5 & 69.0 & 46.5 & 34.0 & 30.0 & 31.0 & 23.5 \\
5 & 71.0 & 48.0 & 42.5 & 40.5 & 34.5 & 30.0 & 70.5 & 40.0 & 30.5 & 22.5 & 28.5 & 29.0 \\
10 & 70.0 & 38.0 & 37.0 & 30.5 & 37.0 & 29.0 & 68.5 & 39.0 & 30.5 & 23.5 & 32.5 & 26.5 \\
20 & 65.5 & 38.5 & 35.0 & 31.0 & 35.5 & 29.0 & 67.0 & 34.0 & 31.0 & 22.0 & 26.0 & 25.5 \\
40 & 61.5 & 36.0 & 30.0 & 25.0 & 33.0 & 29.5 & 65.0 & 30.0 & 24.5 & 21.0 & 27.0 & 24.5 \\
60 & 62.0 & 34.0 & 24.5 & 27.0 & 28.0 & 25.0 & 54.0 & 31.5 & 28.0 & 23.5 & 23.0 & 22.5 \\
80 & 60.0 & 26.0 & 22.0 & 21.0 & 26.0 & 24.0 & 58.5 & 30.0 & 21.5 & 24.0 & 22.0 & 24.0 \\
90 & 52.5 & 29.0 & 22.5 & 19.0 & 26.0 & 22.0 & 57.0 & 27.0 & 20.5 & 21.0 & 28.0 & 25.5 \\
100 & 57.5 & 31.0 & 26.5 & 23.5 & 24.0 & 23.0 & 57.5 & 31.0 & 27.0 & 24.0 & 24.5 & 23.0 \\
\midrule
\rowcolor{lightgreen} \multicolumn{13}{l}{\textit{TriviaQA}} \\
\noalign{\vskip 1pt}
\hdashline
\noalign{\vskip 2.5pt}
0 & 84.5 & 78.0 & 72.5 & 78.5 & 72.0 & 68.5 & 74.5 & 58.5 & 56.0 & 42.0 & 54.5 & 41.5 \\
1 & 77.5 & 64.5 & 56.0 & 55.5 & 57.0 & 44.5 & 74.0 & 58.0 & 46.5 & 44.0 & 52.0 & 43.5 \\
2 & 77.5 & 56.5 & 51.5 & 51.0 & 51.0 & 43.5 & 73.0 & 56.5 & 47.5 & 41.5 & 49.5 & 39.0 \\
3 & 77.5 & 56.5 & 50.5 & 48.0 & 54.0 & 46.5 & 76.0 & 57.5 & 50.0 & 43.5 & 49.5 & 42.5 \\
5 & 74.0 & 52.0 & 45.5 & 49.5 & 53.5 & 45.0 & 75.5 & 57.5 & 47.0 & 37.5 & 44.5 & 36.5 \\
10 & 71.5 & 43.0 & 46.0 & 43.0 & 46.5 & 35.0 & 70.5 & 56.0 & 41.5 & 34.0 & 42.0 & 31.0 \\
20 & 66.5 & 50.5 & 43.0 & 41.0 & 43.0 & 35.5 & 69.5 & 49.5 & 39.0 & 33.0 & 35.0 & 26.5 \\
40 & 66.5 & 41.5 & 40.0 & 35.5 & 40.5 & 31.0 & 70.0 & 50.0 & 35.0 & 31.5 & 29.5 & 28.0 \\
60 & 62.0 & 37.5 & 34.0 & 31.0 & 36.0 & 26.0 & 63.5 & 41.0 & 36.5 & 32.0 & 29.5 & 26.0 \\
80 & 58.5 & 34.0 & 32.5 & 29.0 & 27.0 & 21.5 & 63.0 & 42.5 & 29.5 & 31.5 & 29.0 & 20.5 \\
90 & 61.0 & 40.5 & 32.5 & 30.0 & 26.5 & 23.5 & 60.5 & 43.0 & 32.5 & 27.0 & 27.5 & 21.5 \\
100 & 63.5 & 40.5 & 33.0 & 31.0 & 25.0 & 25.0 & 63.5 & 40.5 & 33.0 & 31.0 & 25.5 & 25.0 \\
\midrule
\rowcolor{lightpurple} \multicolumn{13}{l}{\textit{PopQA}} \\
\noalign{\vskip 1pt}
\hdashline
\noalign{\vskip 2.5pt}
0 & 88.5 & 81.0 & 74.0 & 72.0 & 64.0 & 61.5 & 84.0 & 72.5 & 37.5 & 30.5 & 41.0 & 29.5 \\
1 & 85.0 & 76.0 & 63.0 & 61.5 & 53.5 & 48.0 & 84.0 & 59.5 & 46.0 & 31.0 & 46.0 & 28.5 \\
2 & 85.0 & 68.0 & 60.0 & 57.0 & 52.5 & 46.5 & 86.5 & 61.5 & 44.0 & 30.0 & 42.5 & 30.0 \\
3 & 85.0 & 68.0 & 60.5 & 59.0 & 47.0 & 43.0 & 86.5 & 56.5 & 41.5 & 33.5 & 39.0 & 27.5 \\
5 & 83.5 & 62.5 & 55.0 & 55.0 & 45.5 & 38.0 & 77.5 & 52.0 & 42.5 & 30.0 & 39.0 & 34.5 \\
10 & 81.0 & 54.0 & 55.0 & 50.0 & 46.0 & 43.5 & 78.5 & 61.0 & 45.0 & 28.5 & 37.5 & 34.5 \\
20 & 77.5 & 46.5 & 47.5 & 41.0 & 48.0 & 39.0 & 77.5 & 60.0 & 39.0 & 32.0 & 41.5 & 35.5 \\
40 & 66.0 & 50.5 & 39.5 & 35.0 & 39.5 & 32.0 & 73.5 & 42.0 & 39.5 & 32.5 & 38.0 & 33.0 \\
60 & 64.5 & 41.0 & 34.0 & 32.0 & 37.0 & 33.5 & 71.0 & 42.5 & 35.5 & 36.0 & 34.5 & 33.5 \\
80 & 62.5 & 37.0 & 32.5 & 27.5 & 33.5 & 30.5 & 66.5 & 37.0 & 32.5 & 31.0 & 29.0 & 31.0 \\
90 & 63.0 & 42.0 & 35.0 & 28.5 & 28.5 & 32.5 & 62.0 & 48.0 & 35.0 & 33.5 & 36.5 & 26.5 \\
100 & 63.5 & 42.0 & 43.0 & 27.5 & 29.5 & 31.5 & 64.0 & 42.0 & 43.0 & 27.5 & 29.5 & 31.5 \\
\midrule
\rowcolor{lightorange} \multicolumn{13}{l}{\textit{HotpotQA}} \\
\noalign{\vskip 1pt}
\hdashline
\noalign{\vskip 2.5pt}
0 & 64.5 & 56.5 & 53.0 & 58.5 & 54.5 & 51.0 & 53.5 & 48.5 & 37.0 & 29.0 & 35.0 & 26.0 \\
1 & 59.0 & 48.5 & 45.0 & 44.0 & 44.0 & 32.0 & 52.0 & 45.0 & 37.5 & 31.5 & 37.5 & 23.0 \\
2 & 59.5 & 48.0 & 45.5 & 46.5 & 46.0 & 28.5 & 52.0 & 46.0 & 36.0 & 26.5 & 35.0 & 23.0 \\
3 & 59.0 & 48.0 & 42.0 & 45.5 & 41.0 & 31.5 & 52.0 & 46.5 & 32.5 & 31.5 & 36.5 & 22.0 \\
5 & 54.5 & 43.5 & 34.5 & 41.5 & 41.5 & 26.0 & 49.5 & 43.0 & 36.0 & 28.5 & 30.5 & 20.5 \\
10 & 54.5 & 43.5 & 37.5 & 33.0 & 33.5 & 26.0 & 51.5 & 43.0 & 37.5 & 25.5 & 33.5 & 19.0 \\
20 & 51.0 & 45.0 & 34.5 & 32.0 & 31.5 & 20.0 & 48.5 & 45.0 & 28.0 & 23.5 & 28.0 & 21.5 \\
40 & 49.5 & 39.0 & 29.5 & 25.0 & 30.5 & 15.5 & 46.0 & 38.5 & 33.0 & 22.5 & 25.0 & 19.0 \\
60 & 42.0 & 37.0 & 29.5 & 26.0 & 22.0 & 22.0 & 46.0 & 39.0 & 31.0 & 20.0 & 23.5 & 17.0 \\
80 & 44.0 & 35.5 & 27.0 & 23.0 & 23.0 & 19.0 & 43.5 & 35.5 & 26.0 & 23.5 & 24.0 & 17.0 \\
90 & 43.5 & 35.5 & 24.5 & 23.5 & 25.0 & 19.0 & 42.5 & 35.5 & 29.0 & 22.0 & 24.5 & 21.0 \\
100 & 43.0 & 33.5 & 27.5 & 23.0 & 25.5 & 18.0 & 43.0 & 33.5 & 27.5 & 22.5 & 26.0 & 18.0 \\
\bottomrule
\end{tabular*}
\vspace{0.5em}
\end{table*}

\begin{table*}[!thbp]
\centering
\small
\setlength{\tabcolsep}{3pt}
\definecolor{lightblue}{RGB}{235,244,250}
\definecolor{lightgreen}{RGB}{237,246,240}
\definecolor{lightpurple}{RGB}{242,240,247}
\definecolor{lightorange}{RGB}{252,243,235}
\renewcommand{\arraystretch}{1}
\caption{Accuracy (\%) of \texttt{Llama-3.1-8B-Instruct} across different hard distractor proportions and context lengths. Hard \% indicates the proportion of hard distractors, with the remaining being easy distractors (Easy) or random Wikipedia passages (Random).}
\label{tab:llama-3.1-8b-results}
\begin{tabular*}{\textwidth}{@{\extracolsep{\fill}}l cccccc cccccc@{}}
\toprule
& \multicolumn{6}{c}{\textbf{Easy}} & \multicolumn{6}{c}{\textbf{Random}} \\
\cmidrule(lr){2-7} \cmidrule(l){8-13}
\textbf{Hard \%} & \textbf{4K} & \textbf{8K} & \textbf{16K} & \textbf{32K} & \textbf{64K} & \textbf{128K} & \textbf{4K} & \textbf{8K} & \textbf{16K} & \textbf{32K} & \textbf{64K} & \textbf{128K} \\
\midrule
\rowcolor{lightblue} \multicolumn{13}{l}{\textit{Natural Questions}} \\
\noalign{\vskip 1pt}
\hdashline
\noalign{\vskip 2.5pt}
0 & 89.0 & 91.5 & 90.0 & 88.0 & 87.5 & 87.0 & 88.5 & 90.0 & 90.0 & 90.0 & 86.0 & 82.5 \\
1 & 89.0 & 89.0 & 87.5 & 86.5 & 87.5 & 85.5 & 89.5 & 91.0 & 88.5 & 83.5 & 86.0 & 76.5 \\
2 & 89.0 & 89.5 & 89.5 & 88.0 & 86.5 & 82.0 & 88.5 & 90.5 & 90.5 & 83.5 & 81.5 & 76.0 \\
3 & 89.0 & 89.5 & 88.5 & 83.5 & 83.5 & 78.0 & 88.5 & 90.0 & 86.0 & 83.5 & 81.0 & 75.5 \\
5 & 90.0 & 87.5 & 88.5 & 85.0 & 82.5 & 76.0 & 90.0 & 86.0 & 89.0 & 77.5 & 79.0 & 71.0 \\
10 & 89.5 & 86.0 & 88.0 & 80.0 & 77.5 & 72.5 & 89.0 & 88.5 & 86.5 & 79.5 & 78.5 & 66.0 \\
20 & 91.0 & 88.0 & 85.0 & 79.5 & 76.5 & 70.5 & 87.0 & 85.5 & 83.5 & 81.0 & 77.5 & 64.0 \\
40 & 85.0 & 86.5 & 79.5 & 75.0 & 73.5 & 66.0 & 87.0 & 83.0 & 82.5 & 77.0 & 72.5 & 62.5 \\
60 & 85.5 & 82.0 & 79.5 & 75.0 & 74.0 & 63.5 & 84.0 & 84.0 & 79.0 & 72.5 & 71.5 & 63.0 \\
80 & 84.0 & 80.5 & 78.0 & 71.5 & 73.0 & 62.5 & 83.0 & 83.0 & 79.0 & 71.5 & 70.5 & 65.5 \\
90 & 85.0 & 79.0 & 75.0 & 73.0 & 72.5 & 64.5 & 84.0 & 83.0 & 75.5 & 77.5 & 70.0 & 62.0 \\
100 & 82.5 & 83.0 & 75.5 & 73.5 & 71.0 & 62.0 & 85.0 & 84.0 & 77.0 & 74.0 & 71.0 & 62.5 \\
\midrule
\rowcolor{lightgreen} \multicolumn{13}{l}{\textit{TriviaQA}} \\
\noalign{\vskip 1pt}
\hdashline
\noalign{\vskip 2.5pt}
0 & 97.0 & 96.5 & 97.0 & 96.0 & 96.5 & 96.0 & 97.5 & 95.5 & 95.0 & 94.0 & 93.5 & 93.5 \\
1 & 95.0 & 94.5 & 95.0 & 93.0 & 93.0 & 92.5 & 97.0 & 95.0 & 95.5 & 94.5 & 92.5 & 87.0 \\
2 & 95.0 & 93.5 & 94.5 & 94.0 & 93.5 & 89.0 & 95.5 & 95.0 & 93.5 & 93.0 & 92.0 & 87.5 \\
3 & 95.0 & 94.0 & 95.0 & 94.5 & 92.0 & 90.5 & 95.0 & 95.5 & 94.0 & 93.0 & 92.0 & 86.0 \\
5 & 95.0 & 94.0 & 94.0 & 90.0 & 90.0 & 89.5 & 97.0 & 94.0 & 93.0 & 90.5 & 87.5 & 86.0 \\
10 & 94.0 & 95.0 & 93.0 & 90.5 & 90.5 & 83.5 & 95.5 & 93.0 & 91.5 & 91.0 & 88.0 & 83.5 \\
20 & 94.0 & 93.5 & 93.0 & 89.0 & 89.0 & 82.5 & 95.5 & 93.5 & 92.5 & 88.5 & 88.0 & 80.0 \\
40 & 93.5 & 89.5 & 90.5 & 88.0 & 86.5 & 82.0 & 93.5 & 94.0 & 90.0 & 87.5 & 85.0 & 80.0 \\
60 & 92.5 & 91.0 & 91.0 & 88.0 & 85.5 & 80.0 & 91.0 & 91.5 & 89.5 & 87.5 & 81.5 & 76.0 \\
80 & 93.0 & 91.5 & 90.0 & 85.5 & 81.5 & 78.0 & 93.0 & 90.5 & 88.0 & 85.0 & 84.0 & 76.0 \\
90 & 93.0 & 89.5 & 91.0 & 84.5 & 84.0 & 78.0 & 92.0 & 90.0 & 88.5 & 83.5 & 83.5 & 76.0 \\
100 & 90.0 & 90.0 & 88.0 & 84.5 & 83.5 & 77.5 & 91.0 & 91.0 & 88.0 & 84.0 & 83.5 & 77.0 \\
\midrule
\rowcolor{lightpurple} \multicolumn{13}{l}{\textit{PopQA}} \\
\noalign{\vskip 1pt}
\hdashline
\noalign{\vskip 2.5pt}
0 & 96.0 & 96.0 & 94.5 & 94.0 & 94.5 & 92.0 & 96.0 & 96.5 & 98.0 & 96.0 & 95.0 & 92.5 \\
1 & 97.0 & 97.0 & 94.5 & 94.5 & 92.5 & 92.5 & 97.0 & 97.5 & 96.0 & 95.0 & 95.5 & 90.0 \\
2 & 96.5 & 95.5 & 94.5 & 93.0 & 92.0 & 87.5 & 97.5 & 98.0 & 96.5 & 93.0 & 95.5 & 90.0 \\
3 & 97.5 & 95.5 & 94.5 & 94.0 & 92.0 & 86.0 & 96.0 & 97.5 & 97.0 & 95.5 & 90.0 & 88.5 \\
5 & 93.5 & 95.5 & 93.0 & 94.0 & 88.0 & 84.0 & 95.0 & 96.5 & 96.5 & 94.0 & 90.0 & 83.5 \\
10 & 96.0 & 94.5 & 93.0 & 92.0 & 84.0 & 78.5 & 95.5 & 96.0 & 96.0 & 90.5 & 86.0 & 78.5 \\
20 & 96.0 & 94.5 & 91.5 & 87.0 & 80.5 & 77.5 & 95.5 & 94.0 & 95.0 & 86.5 & 85.0 & 76.0 \\
40 & 93.0 & 92.5 & 91.0 & 84.5 & 83.0 & 75.5 & 95.0 & 93.5 & 91.5 & 85.5 & 76.0 & 73.0 \\
60 & 94.0 & 93.0 & 91.0 & 84.0 & 76.0 & 71.0 & 95.0 & 93.0 & 91.5 & 84.5 & 74.0 & 72.5 \\
80 & 92.5 & 91.0 & 89.5 & 85.5 & 75.5 & 70.5 & 92.0 & 90.5 & 88.5 & 84.0 & 72.0 & 70.5 \\
90 & 90.0 & 91.5 & 87.5 & 81.5 & 76.5 & 71.5 & 92.0 & 91.5 & 90.0 & 82.0 & 76.0 & 68.0 \\
100 & 89.0 & 89.0 & 92.0 & 89.0 & 78.0 & 75.0 & 89.0 & 89.0 & 92.0 & 87.0 & 78.0 & 75.0 \\
\midrule
\rowcolor{lightorange} \multicolumn{13}{l}{\textit{HotpotQA}} \\
\noalign{\vskip 1pt}
\hdashline
\noalign{\vskip 2.5pt}
0 & 86.5 & 85.5 & 83.5 & 80.0 & 81.0 & 70.0 & 80.0 & 71.0 & 67.0 & 60.5 & 61.0 & 49.0 \\
1 & 85.0 & 82.5 & 76.5 & 71.5 & 71.5 & 60.0 & 74.5 & 67.5 & 67.5 & 61.0 & 61.5 & 51.0 \\
2 & 85.0 & 76.0 & 72.0 & 69.5 & 71.0 & 57.5 & 77.5 & 71.5 & 65.5 & 60.5 & 62.5 & 49.0 \\
3 & 85.0 & 76.5 & 72.0 & 68.0 & 66.5 & 57.5 & 73.5 & 71.5 & 65.5 & 61.0 & 61.0 & 44.5 \\
5 & 81.5 & 74.5 & 69.0 & 65.5 & 66.5 & 51.0 & 74.0 & 68.0 & 64.5 & 56.0 & 62.0 & 45.0 \\
10 & 77.5 & 74.5 & 68.5 & 60.5 & 63.5 & 47.5 & 75.5 & 64.5 & 63.0 & 57.0 & 59.0 & 43.5 \\
20 & 75.0 & 70.0 & 64.5 & 58.0 & 60.5 & 47.5 & 75.0 & 65.0 & 60.5 & 55.0 & 58.0 & 38.0 \\
40 & 70.5 & 66.0 & 61.0 & 61.0 & 62.0 & 46.5 & 74.5 & 65.0 & 58.0 & 54.0 & 60.0 & 41.5 \\
60 & 69.0 & 65.5 & 61.5 & 56.0 & 60.0 & 44.0 & 70.5 & 64.5 & 59.0 & 57.0 & 54.5 & 45.0 \\
80 & 70.5 & 65.5 & 58.0 & 58.0 & 60.5 & 45.5 & 71.5 & 63.5 & 54.5 & 54.0 & 55.5 & 41.5 \\
90 & 73.0 & 65.5 & 61.0 & 57.0 & 59.0 & 45.5 & 68.5 & 63.5 & 58.0 & 55.5 & 56.0 & 44.0 \\
100 & 67.0 & 64.0 & 57.0 & 58.0 & 62.0 & 48.0 & 67.0 & 64.0 & 57.0 & 58.0 & 63.0 & 48.0 \\
\bottomrule
\end{tabular*}
\vspace{0.5em}
\end{table*}

\begin{table*}[!thbp]
\centering
\small
\setlength{\tabcolsep}{3pt}
\definecolor{lightblue}{RGB}{235,244,250}
\definecolor{lightgreen}{RGB}{237,246,240}
\definecolor{lightpurple}{RGB}{242,240,247}
\definecolor{lightorange}{RGB}{252,243,235}
\renewcommand{\arraystretch}{1}
\caption{Accuracy (\%) of \texttt{Qwen2.5-7B-Instruct} across different hard distractor proportions and context lengths. Hard \% indicates the proportion of hard distractors, with the remaining being easy distractors (Easy) or random Wikipedia passages (Random).}
\label{tab:qwen2.5-7b-results}
\begin{tabular*}{\textwidth}{@{\extracolsep{\fill}}l cccccc cccccc@{}}
\toprule
& \multicolumn{6}{c}{\textbf{Easy}} & \multicolumn{6}{c}{\textbf{Random}} \\
\cmidrule(lr){2-7} \cmidrule(l){8-13}
\textbf{Hard \%} & \textbf{4K} & \textbf{8K} & \textbf{16K} & \textbf{32K} & \textbf{64K} & \textbf{128K} & \textbf{4K} & \textbf{8K} & \textbf{16K} & \textbf{32K} & \textbf{64K} & \textbf{128K} \\
\midrule
\rowcolor{lightblue} \multicolumn{13}{l}{\textit{Natural Questions}} \\
\noalign{\vskip 1pt}
\hdashline
\noalign{\vskip 2.5pt}
0 & 84.5 & 83.5 & 75.5 & 75.0 & 75.5 & 68.0 & 78.5 & 71.5 & 70.5 & 74.5 & 70.0 & 44.5 \\
1 & 82.0 & 78.0 & 75.5 & 73.0 & 71.0 & 69.0 & 72.0 & 71.5 & 69.5 & 64.0 & 60.5 & 35.0 \\
2 & 81.5 & 76.0 & 74.5 & 70.5 & 71.5 & 52.5 & 80.5 & 73.5 & 70.5 & 65.0 & 58.0 & 36.5 \\
3 & 83.5 & 77.0 & 73.0 & 72.5 & 68.0 & 51.5 & 74.0 & 69.0 & 68.5 & 61.5 & 62.0 & 32.0 \\
5 & 80.0 & 79.5 & 72.0 & 72.0 & 67.5 & 42.5 & 70.5 & 72.0 & 72.0 & 65.5 & 63.0 & 36.5 \\
10 & 82.0 & 75.0 & 69.0 & 65.5 & 65.0 & 45.5 & 72.0 & 68.0 & 72.5 & 64.5 & 58.0 & 32.5 \\
20 & 77.0 & 71.0 & 69.5 & 58.5 & 62.0 & 34.0 & 74.5 & 72.0 & 68.0 & 54.5 & 55.5 & 35.5 \\
40 & 75.0 & 69.0 & 67.5 & 53.0 & 54.0 & 41.0 & 75.0 & 69.0 & 66.0 & 54.5 & 51.5 & 33.0 \\
60 & 76.0 & 67.0 & 61.5 & 48.5 & 57.0 & 40.0 & 76.5 & 67.0 & 66.5 & 53.5 & 50.0 & 33.5 \\
80 & 73.0 & 66.5 & 62.0 & 57.0 & 51.0 & 30.5 & 73.0 & 63.5 & 62.5 & 47.0 & 56.0 & 32.5 \\
90 & 73.5 & 67.5 & 65.0 & 53.5 & 49.0 & 29.5 & 71.0 & 66.5 & 60.5 & 45.5 & 46.0 & 33.0 \\
100 & 74.0 & 63.0 & 62.0 & 55.5 & 54.5 & 32.5 & 73.5 & 64.5 & 61.0 & 55.0 & 54.0 & 32.0 \\
\midrule
\rowcolor{lightgreen} \multicolumn{13}{l}{\textit{TriviaQA}} \\
\noalign{\vskip 1pt}
\hdashline
\noalign{\vskip 2.5pt}
0 & 89.0 & 87.5 & 89.5 & 83.0 & 82.0 & 81.0 & 88.5 & 84.0 & 78.5 & 74.5 & 67.0 & 42.5 \\
1 & 86.5 & 84.0 & 79.0 & 75.0 & 73.0 & 63.0 & 86.0 & 79.5 & 78.0 & 74.0 & 65.0 & 39.0 \\
2 & 86.5 & 84.5 & 82.5 & 74.5 & 69.5 & 56.5 & 85.5 & 81.5 & 80.0 & 72.0 & 65.0 & 31.5 \\
3 & 86.5 & 83.5 & 79.5 & 71.5 & 66.5 & 53.0 & 87.0 & 80.0 & 76.5 & 73.0 & 61.0 & 34.5 \\
5 & 85.5 & 81.5 & 76.0 & 67.5 & 64.0 & 44.5 & 87.0 & 80.5 & 78.5 & 68.0 & 66.5 & 35.0 \\
10 & 81.0 & 79.0 & 76.5 & 64.0 & 59.0 & 41.0 & 86.0 & 74.5 & 74.5 & 70.5 & 66.5 & 34.0 \\
20 & 83.0 & 76.5 & 76.5 & 65.5 & 58.5 & 44.0 & 81.5 & 74.0 & 75.0 & 59.5 & 58.0 & 27.5 \\
40 & 76.0 & 74.0 & 69.5 & 58.5 & 57.5 & 34.5 & 80.5 & 74.0 & 74.0 & 64.5 & 55.0 & 29.5 \\
60 & 77.0 & 70.5 & 72.0 & 56.0 & 53.5 & 34.0 & 77.0 & 71.0 & 73.0 & 55.5 & 52.0 & 30.5 \\
80 & 75.0 & 74.0 & 70.5 & 63.5 & 52.0 & 33.5 & 72.5 & 70.5 & 69.0 & 62.5 & 48.5 & 30.0 \\
90 & 74.5 & 73.5 & 68.5 & 60.5 & 58.0 & 32.5 & 74.5 & 72.5 & 67.5 & 60.0 & 54.5 & 28.0 \\
100 & 74.0 & 70.5 & 70.0 & 59.5 & 57.0 & 32.0 & 74.5 & 70.0 & 70.0 & 59.0 & 56.5 & 32.5 \\
\midrule
\rowcolor{lightpurple} \multicolumn{13}{l}{\textit{PopQA}} \\
\noalign{\vskip 1pt}
\hdashline
\noalign{\vskip 2.5pt}
0 & 93.0 & 93.0 & 88.5 & 88.5 & 89.5 & 86.0 & 91.0 & 89.5 & 82.0 & 82.0 & 74.5 & 91.0 \\
1 & 90.0 & 92.5 & 83.0 & 87.0 & 83.0 & 68.0 & 91.5 & 87.5 & 86.0 & 77.0 & 71.5 & 41.0 \\
2 & 89.0 & 88.0 & 85.0 & 83.5 & 75.0 & 60.0 & 90.0 & 88.0 & 85.5 & 77.5 & 71.0 & 39.5 \\
3 & 89.5 & 87.5 & 85.0 & 81.0 & 70.5 & 61.5 & 91.5 & 87.5 & 83.5 & 77.0 & 67.0 & 34.0 \\
5 & 88.0 & 88.5 & 87.0 & 77.5 & 72.0 & 48.0 & 89.5 & 86.0 & 83.5 & 76.5 & 63.5 & 36.0 \\
10 & 89.0 & 81.5 & 84.5 & 76.0 & 70.5 & 47.0 & 89.0 & 81.5 & 80.0 & 72.5 & 62.0 & 42.0 \\
20 & 89.0 & 83.0 & 80.0 & 72.0 & 63.5 & 45.0 & 88.5 & 79.5 & 79.5 & 73.0 & 64.0 & 32.0 \\
40 & 84.5 & 83.0 & 80.0 & 71.0 & 62.0 & 37.0 & 87.5 & 81.5 & 75.5 & 65.0 & 55.5 & 32.5 \\
60 & 82.0 & 78.0 & 71.5 & 63.5 & 53.0 & 38.0 & 87.0 & 76.5 & 72.5 & 63.0 & 51.5 & 33.5 \\
80 & 83.5 & 76.5 & 74.5 & 60.0 & 54.5 & 34.5 & 84.5 & 71.0 & 76.0 & 60.0 & 49.5 & 27.0 \\
90 & 80.0 & 77.5 & 75.5 & 61.0 & 50.0 & 27.5 & 85.5 & 76.0 & 77.0 & 60.5 & 46.5 & 29.5 \\
100 & 84.0 & 70.0 & 73.0 & 58.5 & 50.5 & 28.0 & 84.0 & 70.5 & 72.5 & 57.5 & 53.0 & 29.0 \\
\midrule
\rowcolor{lightorange} \multicolumn{13}{l}{\textit{HotpotQA}} \\
\noalign{\vskip 1pt}
\hdashline
\noalign{\vskip 2.5pt}
0 & 75.5 & 76.0 & 77.0 & 73.5 & 70.0 & 68.5 & 71.0 & 67.0 & 63.5 & 62.5 & 50.5 & 24.5 \\
1 & 73.5 & 73.0 & 72.5 & 67.5 & 56.0 & 52.0 & 68.5 & 62.0 & 65.0 & 63.0 & 53.5 & 26.5 \\
2 & 75.0 & 75.5 & 65.5 & 60.5 & 56.5 & 44.0 & 68.5 & 61.5 & 63.0 & 56.5 & 51.5 & 29.5 \\
3 & 74.5 & 73.5 & 67.5 & 61.5 & 55.5 & 39.0 & 67.5 & 65.5 & 56.0 & 60.5 & 53.0 & 25.0 \\
5 & 74.0 & 69.0 & 68.0 & 59.0 & 50.0 & 39.5 & 67.0 & 63.0 & 58.0 & 56.5 & 45.5 & 26.5 \\
10 & 73.0 & 64.0 & 65.0 & 55.5 & 49.0 & 35.0 & 62.0 & 62.5 & 60.0 & 53.5 & 45.5 & 21.0 \\
20 & 63.5 & 63.5 & 61.0 & 54.5 & 44.5 & 27.5 & 67.0 & 57.0 & 48.0 & 45.0 & 44.0 & 22.0 \\
40 & 68.5 & 55.0 & 52.0 & 48.5 & 43.0 & 24.0 & 64.5 & 54.5 & 53.0 & 46.5 & 37.5 & 22.0 \\
60 & 65.0 & 58.0 & 53.5 & 44.0 & 38.0 & 24.5 & 61.5 & 55.0 & 48.5 & 44.5 & 41.0 & 23.5 \\
80 & 69.5 & 53.5 & 48.5 & 44.5 & 42.5 & 24.5 & 65.5 & 49.5 & 47.0 & 42.5 & 38.5 & 23.5 \\
90 & 64.0 & 55.5 & 49.5 & 41.0 & 45.0 & 22.0 & 64.0 & 53.0 & 44.5 & 38.5 & 41.5 & 21.0 \\
100 & 66.5 & 56.5 & 48.0 & 40.0 & 39.5 & 18.5 & 67.0 & 58.0 & 48.0 & 41.0 & 38.5 & 19.0 \\
\bottomrule
\end{tabular*}
\vspace{0.5em}
\end{table*}

\begin{table*}[!thbp]
\centering
\small
\setlength{\tabcolsep}{3pt}
\definecolor{lightblue}{RGB}{235,244,250}
\definecolor{lightgreen}{RGB}{237,246,240}
\definecolor{lightpurple}{RGB}{242,240,247}
\definecolor{lightorange}{RGB}{252,243,235}
\caption{Accuracy (\%) of \texttt{Qwen3-Next-80B-Instruct} across different hard distractor proportions and context lengths. Hard \% indicates the proportion of hard distractors, with the remaining being easy distractors (Easy) or random Wikipedia passages (Random).}
\label{tab:qwen3-next-80b-results}
\renewcommand{\arraystretch}{1}
\begin{tabular*}{\textwidth}{@{\extracolsep{\fill}}l cccccc cccccc@{}}
\toprule
& \multicolumn{6}{c}{\textbf{Easy}} & \multicolumn{6}{c}{\textbf{Random}} \\
\cmidrule(lr){2-7} \cmidrule(l){8-13}
\textbf{Hard \%} & \textbf{4K} & \textbf{8K} & \textbf{16K} & \textbf{32K} & \textbf{64K} & \textbf{128K} & \textbf{4K} & \textbf{8K} & \textbf{16K} & \textbf{32K} & \textbf{64K} & \textbf{128K} \\
\midrule
\rowcolor{lightblue} \multicolumn{13}{l}{\textit{Natural Questions}} \\
\noalign{\vskip 1pt}
\hdashline
\noalign{\vskip 2.5pt}
0 & 93.0 & 92.0 & 93.5 & 92.0 & 91.0 & 91.5 & 94.0 & 90.5 & 90.0 & 89.5 & 90.5 & 91.5 \\
1 & 92.0 & 92.0 & 91.0 & 90.5 & 87.0 & 88.0 & 91.5 & 91.5 & 90.0 & 89.0 & 90.5 & 87.0 \\
2 & 91.5 & 92.0 & 90.5 & 87.5 & 87.0 & 89.5 & 90.0 & 90.0 & 90.0 & 90.5 & 90.5 & 85.0 \\
3 & 92.0 & 92.0 & 90.0 & 88.0 & 88.5 & 83.0 & 92.5 & 88.5 & 90.5 & 88.0 & 89.5 & 85.5 \\
5 & 92.0 & 90.5 & 85.5 & 89.5 & 86.0 & 87.0 & 90.0 & 89.5 & 90.5 & 88.5 & 87.0 & 85.5 \\
10 & 91.5 & 88.5 & 88.5 & 86.5 & 87.0 & 84.5 & 91.0 & 86.5 & 87.5 & 85.5 & 85.0 & 82.0 \\
20 & 91.0 & 90.5 & 88.5 & 88.0 & 84.5 & 81.5 & 89.0 & 88.5 & 86.5 & 85.5 & 85.5 & 82.0 \\
40 & 91.0 & 89.5 & 86.5 & 83.5 & 83.5 & 82.0 & 87.0 & 87.5 & 89.5 & 84.0 & 81.5 & 79.5 \\
60 & 88.5 & 88.0 & 84.5 & 84.5 & 83.5 & 79.5 & 86.5 & 88.0 & 85.0 & 84.0 & 82.0 & 78.5 \\
80 & 88.0 & 87.0 & 86.5 & 83.0 & 80.0 & 79.0 & 87.5 & 88.5 & 81.0 & 82.5 & 81.5 & 78.0 \\
90 & 87.5 & 87.5 & 82.5 & 84.5 & 83.5 & 80.0 & 87.0 & 86.0 & 83.5 & 83.5 & 81.5 & 77.0 \\
100 & 88.0 & 85.0 & 84.0 & 83.5 & 82.5 & 77.5 & 88.0 & 84.5 & 85.5 & 84.0 & 80.5 & 76.5 \\
\midrule
\rowcolor{lightgreen} \multicolumn{13}{l}{\textit{TriviaQA}} \\
\noalign{\vskip 1pt}
\hdashline
\noalign{\vskip 2.5pt}
0 & 99.5 & 99.5 & 99.5 & 99.5 & 99.5 & 99.5 & 98.0 & 97.5 & 98.0 & 98.5 & 98.5 & 98.0 \\
1 & 98.0 & 98.5 & 98.0 & 96.5 & 96.0 & 95.5 & 95.5 & 97.5 & 97.0 & 98.0 & 97.0 & 96.5 \\
2 & 98.5 & 97.0 & 97.5 & 95.5 & 96.5 & 94.0 & 97.0 & 97.0 & 97.0 & 97.0 & 96.5 & 95.0 \\
3 & 98.5 & 97.5 & 96.5 & 94.0 & 95.0 & 95.0 & 95.5 & 97.0 & 96.0 & 97.5 & 96.0 & 94.0 \\
5 & 97.5 & 98.0 & 96.5 & 94.0 & 97.0 & 96.0 & 95.0 & 96.5 & 96.0 & 96.5 & 96.0 & 93.5 \\
10 & 96.5 & 95.5 & 94.0 & 94.5 & 95.0 & 95.5 & 96.5 & 94.5 & 94.5 & 95.0 & 93.5 & 94.0 \\
20 & 96.5 & 94.5 & 94.5 & 94.0 & 94.5 & 93.0 & 95.0 & 95.0 & 96.0 & 94.5 & 96.5 & 93.0 \\
40 & 95.0 & 95.5 & 95.5 & 93.0 & 94.0 & 92.0 & 94.0 & 95.5 & 94.5 & 94.0 & 93.0 & 92.5 \\
60 & 95.5 & 95.5 & 93.0 & 94.5 & 92.0 & 89.5 & 96.5 & 94.0 & 94.5 & 93.0 & 92.5 & 89.0 \\
80 & 96.0 & 96.0 & 95.0 & 92.0 & 93.5 & 89.0 & 94.5 & 95.0 & 93.5 & 92.5 & 90.0 & 85.5 \\
90 & 95.5 & 94.0 & 92.0 & 92.5 & 92.5 & 88.0 & 96.0 & 92.0 & 94.5 & 92.0 & 91.0 & 86.5 \\
100 & 93.5 & 94.5 & 94.0 & 90.5 & 90.0 & 87.0 & 95.0 & 93.0 & 95.0 & 91.5 & 90.5 & 87.0 \\
\midrule
\rowcolor{lightpurple} \multicolumn{13}{l}{\textit{PopQA}} \\
\noalign{\vskip 1pt}
\hdashline
\noalign{\vskip 2.5pt}
0 & 97.5 & 97.5 & 98.0 & 98.0 & 98.0 & 98.0 & 98.5 & 98.5 & 99.0 & 98.5 & 98.5 & 97.5 \\
1 & 98.0 & 98.0 & 97.0 & 96.5 & 97.5 & 97.0 & 98.5 & 98.5 & 98.0 & 97.0 & 97.5 & 95.0 \\
2 & 97.5 & 97.5 & 97.0 & 97.5 & 97.0 & 96.0 & 98.0 & 97.5 & 97.0 & 97.5 & 96.5 & 94.0 \\
3 & 97.5 & 97.5 & 97.0 & 97.5 & 97.0 & 94.5 & 98.0 & 98.5 & 98.5 & 97.0 & 96.5 & 92.0 \\
5 & 96.5 & 97.5 & 97.5 & 97.5 & 97.5 & 93.0 & 97.0 & 97.5 & 98.5 & 96.0 & 96.0 & 92.0 \\
10 & 96.5 & 95.5 & 97.0 & 93.5 & 95.0 & 94.0 & 97.0 & 97.5 & 97.0 & 95.5 & 93.5 & 89.5 \\
20 & 97.5 & 96.5 & 95.0 & 94.5 & 93.0 & 87.5 & 97.5 & 97.0 & 96.5 & 94.5 & 91.0 & 89.0 \\
40 & 98.5 & 95.5 & 91.0 & 89.5 & 88.0 & 86.0 & 97.0 & 96.0 & 90.5 & 91.0 & 88.5 & 84.5 \\
60 & 97.0 & 94.5 & 91.0 & 92.0 & 86.5 & 84.5 & 98.0 & 94.0 & 91.5 & 88.0 & 87.0 & 81.5 \\
80 & 96.0 & 93.5 & 93.5 & 86.0 & 86.0 & 82.5 & 97.0 & 92.0 & 92.5 & 87.5 & 85.0 & 78.5 \\
90 & 97.0 & 92.5 & 93.0 & 88.5 & 86.0 & 81.5 & 96.0 & 91.5 & 92.5 & 85.5 & 84.5 & 78.5 \\
100 & 96.0 & 92.5 & 91.0 & 87.0 & 84.0 & 80.0 & 95.0 & 92.0 & 91.5 & 87.5 & 84.0 & 81.5 \\
\midrule
\rowcolor{lightorange} \multicolumn{13}{l}{\textit{HotpotQA}} \\
\noalign{\vskip 1pt}
\hdashline
\noalign{\vskip 2.5pt}
0 & 88.5 & 90.5 & 88.0 & 89.0 & 90.5 & 90.0 & 85.0 & 84.5 & 82.0 & 81.0 & 82.0 & 69.0 \\
1 & 85.0 & 87.5 & 86.0 & 85.0 & 88.5 & 84.5 & 85.0 & 84.5 & 79.5 & 81.0 & 80.0 & 69.0 \\
2 & 85.5 & 88.0 & 81.5 & 83.5 & 84.5 & 82.0 & 83.0 & 85.0 & 80.0 & 80.5 & 77.5 & 69.5 \\
3 & 85.5 & 88.0 & 82.0 & 84.5 & 84.5 & 78.5 & 85.0 & 85.5 & 79.5 & 82.0 & 81.0 & 68.5 \\
5 & 88.0 & 84.0 & 79.5 & 81.0 & 82.5 & 75.0 & 85.0 & 84.0 & 78.0 & 79.0 & 79.5 & 67.0 \\
10 & 86.5 & 85.0 & 78.5 & 80.0 & 80.5 & 76.0 & 85.0 & 84.0 & 77.5 & 76.0 & 76.5 & 67.0 \\
20 & 83.5 & 83.0 & 75.5 & 75.0 & 79.0 & 68.5 & 81.0 & 82.0 & 75.5 & 74.0 & 74.0 & 64.0 \\
40 & 83.0 & 80.5 & 73.0 & 74.0 & 77.0 & 70.0 & 83.0 & 81.5 & 75.0 & 71.0 & 72.5 & 62.0 \\
60 & 82.5 & 77.0 & 72.0 & 74.0 & 74.5 & 67.5 & 82.5 & 79.0 & 73.5 & 70.5 & 67.0 & 64.5 \\
80 & 81.0 & 79.5 & 73.5 & 73.5 & 76.0 & 67.0 & 82.0 & 81.5 & 73.0 & 71.0 & 69.5 & 67.0 \\
90 & 80.5 & 80.0 & 73.0 & 74.0 & 71.5 & 68.5 & 79.0 & 80.5 & 72.5 & 72.0 & 70.0 & 64.0 \\
100 & 81.5 & 79.0 & 71.5 & 73.0 & 71.5 & 69.0 & 81.5 & 79.5 & 72.5 & 74.5 & 72.0 & 69.5 \\
\bottomrule
\end{tabular*}
\vspace{0.5em}
\end{table*}

\section{Margin Computation ($\Delta_e$ and $\Delta_h$)}
\label{appendix:margin}
As mentioned in \S\ref{sec:validation}, we calculate the margin for \texttt{Llama-3.1-8b-Instruct} and \texttt{Llama-3.2-1b-Instruct}. Below are results for \texttt{Llama-3.2-1b-Instruct} and the correlation results for both models.

\begin{figure}[!thbp]
    \centering
    \includegraphics[width=0.5\linewidth]{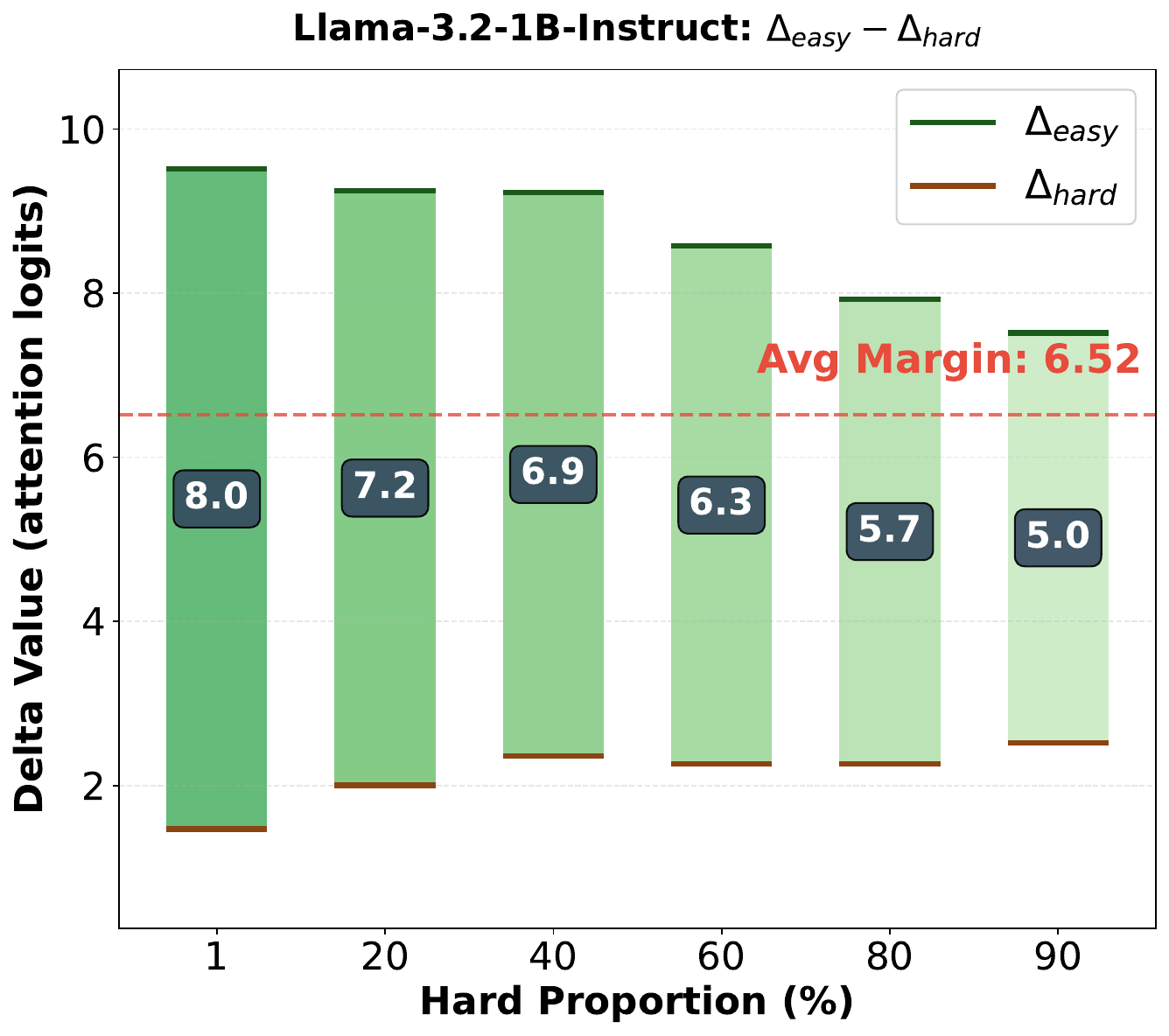}
    \caption{Empirical measurement of logit margins $\Delta_e$ and $\Delta_h$ on retrieval heads for Llama-3.2-1B-Instruct. Green bars show $\Delta_e$ (margin to easy distractors) and brown bars show $\Delta_h$ (margin to hard distractors). The gap $\Delta_e - \Delta_h$ remains substantial across all hard proportions, with an average of 6.52, which is more significant than the gap of the 8B model. This validates the theoretical assumption $\Delta_h \ll \Delta_e$.}
    \label{fig:placeholder}
\end{figure}

 \begin{table}[!thbp]
  \centering
  \caption{Per-file train–test correlations for selected hard
  proportions on \texttt{nq\_easy}.}
  \label{tab:head-corr-by-easy}
  \begin{tabular}{lcccc}
  \toprule
  \textbf{Hard Proportion} & \textbf{8B Pearson} & \textbf{8B Spearman} &
  \textbf{1B Pearson} & \textbf{1B Spearman} \\
  \midrule
  1\% & 0.9498 & 0.9838 & 0.9599 & 0.9953 \\
  20\%  & 0.9619 & 0.9854 & 0.9592 & 0.9962 \\
  40\%  & 0.9753 & 0.9915 & 0.9546 & 0.9970 \\
  60\%  & 0.9725 & 0.9894 & 0.9522 & 0.9964 \\
  80\%  & 0.9508 & 0.9832 & 0.9522 & 0.9922 \\
  90\%  & 0.9491 & 0.9850 & 0.9278 & 0.9868 \\
  \bottomrule
  \end{tabular}
  \vspace{0.5em}
  \end{table}

\newpage
\section{Prompts Demonstration}
\label{appendix:prompts}

In this section, we demonstrate the prompts used for: (1)evaluating model's output and (2) verifying the distractors not containing the answers.

\begin{tcolorbox}[
    colframe=gray!70, 
    coltitle=black, 
    colbacktitle=gray!30,
    colback=white, 
    title=\textbf{Evaluation Prompt},
    fonttitle=\bfseries,
    center title,
    rounded corners,
    arc=4mm,
    width=\linewidth, 
    fontupper=\small
]
You are an expert evaluator for question-answering systems. Your task is to determine if a model's answer to a question is correct based on the provided documents and reference answer.

\vspace{1mm}
{\color{blue}\textbf{\# Documents}}\\
\texttt{\{context\}}

\vspace{1mm}
{\color{blue}\textbf{\# Question}}\\
\texttt{\{question\}}

\vspace{1mm}
{\color{blue}\textbf{\# Reference Answer}}\\
\texttt{\{reference\_answer\}}

\vspace{1mm}
{\color{blue}\textbf{\# Model's Answer}}\\
\texttt{\{model\_answer\}}

\vspace{1mm}
{\color{blue}\textbf{\# Evaluation Task}}\\
Based on the information in the provided documents and the reference answer, evaluate whether the model's answer is correct. 

\vspace{1mm}
The answer is \textbf{CORRECT} if:
\begin{enumerate}[nosep, leftmargin=1.5em]
    \item It matches or is semantically equivalent to the reference answer
    \item It accurately answers the question using information from the documents
    \item It does not contain extra hallucinated or incorrect information
\end{enumerate}

\vspace{1mm}
The answer should be considered \textbf{CORRECT} even if:
\begin{itemize}[nosep, leftmargin=1.5em]
    \item It uses slightly different wording but conveys the same meaning
    \item It uses synonyms or alternative names for the same entity
    \item It is a shorter or longer form of the reference answer (e.g., ``Steven Weber'' vs ``Steven Robert Weber'')
\end{itemize}

\vspace{1mm}
Respond with \textbf{ONLY} one of these two options:
\begin{itemize}[nosep, leftmargin=1.5em]
    \item \texttt{CORRECT}
    \item \texttt{INCORRECT}
\end{itemize}

\end{tcolorbox}

\begin{tcolorbox}[
    colframe=gray!70, 
    coltitle=black, 
    colbacktitle=gray!30,
    colback=white, 
    title=\textbf{Distractor Verification Prompt},
    fonttitle=\bfseries,
    center title,
    rounded corners,
    arc=4mm,
    width=\linewidth, 
    fontupper=\small
]
You are evaluating whether a set of documents contains the answer to a given question.

\vspace{1mm}
{\color{blue}\textbf{\# Question}}

\texttt{\{question\}}

\vspace{1mm}
{\color{blue}\textbf{\# Accepted Answers}}

\texttt{\{', '.join(answer)\}}

\vspace{1mm}
{\color{blue}\textbf{\# Documents}}

\texttt{\{docs\_text\}}

\vspace{1mm}
{\color{blue}\textbf{\# Task}}

Determine if ANY of the documents above EXPLICITLY contain information that would allow someone to answer the question with one of the accepted answers. Please respond in the following JSON format:

\begin{verbatim}
{
    "has_answer": true/false,
    "explanation": "Brief explanation...",
    "confidence": "high/medium/low"
}
\end{verbatim}

{\color{blue}\textbf{\# Critical Rules}}
\begin{itemize}[nosep, leftmargin=1.5em]
    \item You must ONLY use information that is EXPLICITLY stated in the provided documents
    \item Do NOT use any external knowledge or make inferences based on your own knowledge
    \item Do NOT assume facts that are not directly written in the documents
    \item Answer "has\_answer": true ONLY if the answer is EXPLICITLY written in the documents
    \item Consider synonyms and paraphrases (e.g., "USA" and "United States" are equivalent)
    \item If the documents mention the topic but don't EXPLICITLY contain the specific answer, count it as false
\end{itemize}

\vspace{1mm}
{\color{blue}\textbf{\# Examples}}
\begin{itemize}[nosep, leftmargin=1.5em]
    \item INVALID reasoning: "Green Day is known for punk rock" - this uses external knowledge
    \item VALID reasoning: "The document states 'punk rock band Green Day'" - this quotes the document
\end{itemize}
\end{tcolorbox}
\end{document}